\def\BibTeX{{\rm B\kern-.05em{\sc i\kern-.025em b}\kern-.08em
		T\kern-.1667em\lower.7ex\hbox{E}\kern-.125emX}}
\newtheorem{lemma}{Lemma}
\newtheorem{example}{Example}
\newcommand{\calE}{\mathcal{E}}
\newcommand{\calF}{\mathcal{F}}
\newcommand{\calX}{\mathcal{X}}
\newcommand{\calY}{\mathcal{Y}}
\newcommand{\bg}[1]{\textcolor{teal}{BG: {#1}}} 
\newcommand{\ag}[1]{\textcolor{green}{AG: {#1}}} 
\newcommand{\ignore}[1]{}
\DeclareMathOperator*{\argmax}{arg\,max}
\newtheorem{definition}{Definition}
\newtheorem{problem}{Problem}
\newcommand{\alg}{{\sf MISFEAT}}
\newcommand{\algspace}{{\sf MISFEAT} }
\newcommand{\algsample}{{\sf RANDWALK }}
\newcommand{\algsamplenospace}{{\sf RANDWALK}}
\newif\ifTechReport
\newif\ifNotTechReport
\begin{document}
%

\title{\alg: Feature Selection for Subgroups with Systematic Missing Data}
\author{
\IEEEauthorblockN{Bar Genossar$^{+}$, Thinh On$^{*}$, Md. Mouinul Islam$^{*}$, Ben Eliav$^{+}$, Senjuti Basu Roy$^{*}$, Avigdor Gal$^{+}$}
\IEEEauthorblockA{\textit{Technion -- Israel Institute of Technology$^{+}$}, \textit{New Jersey Institute of Technology}$^{*}$ \\
sbargen@campus.technion.ac.il,to58@njit.edu,mi257@njit.edu, \\
ben.eliav@campus.technion.ac.il,senjutib@njit.edu,avigal@technion.ac.il}
}

\maketitle

\begin{abstract}
We investigate the problem of selecting features for datasets that can be naturally partitioned into {\em subgroups} ({\em e.g.}, according to socio-demographic groups and age), each with its own dominant set of features. Within this subgroup-oriented framework, we address the challenge of \emph{systematic missing data}, a scenario in which some feature values are missing for all tuples of a subgroup, due to flawed data integration, regulatory constraints, or privacy concerns. Feature selection is governed by finding \emph{mutual Information}, a popular quantification of correlation, between features and a target variable. Our goal is to identify top-$K$ feature subsets of some fixed size with the highest joint mutual information with a target variable. In the presence of systematic missing data, the closed form of mutual information could not simply be applied. We argue that in such a setting, leveraging relationships between available feature 
mutual information within a subgroup or across subgroups can assist inferring missing mutual information values. 
We propose a generalizable model based on \emph{heterogeneous graph neural network} to identify interdependencies between feature-subgroup-target variable connections by modeling it as a multiplex graph, and employing information propagation between its nodes. We address two distinct scalability challenges related to training and propose principled solutions to tackle them. Through an extensive empirical evaluation, we demonstrate the efficacy of the proposed solutions both qualitatively and running time wise.
\end{abstract}

\section{Introduction and Motivation}
Features, measurable properties of a phenomenon, are useful in data science for data exploration and model building. Feature selection, the process of choosing informative and discriminating features from a large set, is a key step in data science pipelines, focusing on retaining features that provide the greatest benefit for learning~\cite{li2017feature, chandrashekar2014survey, dash1997feature}.
Mutual information (MI) is a model-agnostic measure that quantifies the dependency between two variables and has been widely used in feature selection~\cite{chen2021efficient, vergara2014review, pascoal2017theoretical, huang2007hybrid, battiti1994using, fleuret2004fast}. Intuitively, features with higher MI values relative to the target variable are often more important.

In this work, we focus on feature selection using MI that is challenged by the presence of (1) distinct subgroups and (2) systematic missing data. A \emph{subgroup} 
constitutes a group of records demonstrating identical characteristics pertinent to an application at hand. Subgroup analysis is common in user group analytics~\cite{DBLP:journals/tkde/Omidvar-Tehrani20} and we illustrate the notion of subgroups next using medical cohort analytics~\cite{MUNSHI2017187,sun2014use}. 


\begin{example}
\label{ex:Readmission_subgroups} 
In medical cohort analysis, given a subgroup (cohort), experts seek answers to three
typical questions~\cite{DBLP:journals/vldb/Omidvar-Tehrani20}, namely predicting some future status of patients in a cohort, interpreting a phenomenon using a cohort, and seeking similar cohorts. To effectively pursue these tasks, 
models are empowered by the most informative (for within subgroup analysis) and discriminating (for between subgroup analysis) features. 
\begin{table*}[t]
	\centering
	\caption{Example Cardiovascular Health Dataset with Systematic Missing Data}
	\begin{tabular}{|c|c|c|c|c|c|c|c|c|c|}
		\hline
		\textbf{Patient ID} & \textbf{Age} & \textbf{Ethnicity} & \textbf{Blood Pressure} & \textbf{Family History} & \textbf{Body Weight} & \textbf{Smoking} & \textbf{Cholesterol} & \textbf{Readmission} \\
		\hline
		\rowcolor{blue!20} 1 & 45 & Asian & Normal & Yes & Overweight  & Non-smoker & Normal & No \\
		\rowcolor{yellow!30} 2 & 62 & Caucasian & Null & No & Overweight  & Smoker & Normal & Yes \\
		\rowcolor{red!30} 3 & 35 & Caucasian & Normal & Yes & Normal  & Non-smoker & Null & Yes \\
		\rowcolor{yellow!30} 4 & 50 & Caucasian & Null & Yes & Obese  & Non-smoker & High & Yes \\
		\rowcolor{green!30} 5 & 30 & Asian & Null & No & Normal & Smoker & Null & No \\
		\rowcolor{red!30} 6 & 28 & Caucasian & Null & Yes & Overweight & Non-smoker & Null & Yes \\
		\rowcolor{blue!20} 7 & 55 & Asian & Hypertension & Yes & Overweight & Smoker & Low & No \\
		\rowcolor{green!30} 8 & 28 & Asian & Null & No & Normal & Smoker & Null & No \\
		\rowcolor{yellow!30} 9 & 60 & Caucasian & Null & Yes & Obese & Non-smoker & Normal & Yes \\
		\rowcolor{green!30} 10 & 38 & Asian & Null & Yes & Overweight & Smoker & Null & Yes \\
		\rowcolor{blue!20} 11 & 54 & Asian & Prehypertension & Yes & Normal & Non-smoker & Null & Yes \\
		\rowcolor{red!30} 12 & 30 & Caucasian & Prehypertension & No & Underweight & Smoker & Null & No \\
		\hline
	\end{tabular}
	\label{tab:ReadmissionTable}
\end{table*}

Table~\ref{tab:ReadmissionTable} represents an instance of patient health records. Consider a task of predicting readmission (target variable) to a hospital within 30 days after initial patient discharge. 
Cohort analysis over this data excerpt may be based on a combination of age range and ethnic group. A possible such characterization (marked with different colors in the table) entails four subgroups: Asian aged 40 and below, Caucasian aged 40 and below, Asian above 40, and Caucasian above 40. In the context of feature selection, the goal is to identify for each of the subgroups predictors that exhibit high informativeness of being readmitted.
Focusing on the {\em Family History} feature, whenever its value is positive for the subgroups of 40 and below,  the corresponding {\em Readmission} value is consistently ``Yes,'' implying 
that {\em Family History} is a predictive feature of readmission within these subgroups. However, the same pattern does not hold for other subgroups. 
This nuanced subgroup-specific variation highlights the importance of tailoring feature selection to distinct subgroups.
\end{example}

Missing data is a well-known phenomenon in data science, attributed to data issues such as 
measurement errors, manual data entry issues, data integration flaws and intentional non-responses. Missing data also complicates feature selection, due to the difficulty in assessing feature relevance with incomplete information~\cite{baraldi2010introduction, zhu2021efficient, little2019statistical}. While data can be missing at random, we are especially interested in systematic missingness~\cite{newman2014missing}, referring to a pattern that follows a discernible trend or mechanism. Systematic missingness often results from regulatory constraints. For example, privacy regulations might require the removal of sensitive attribute data ({\em e.g.}, gender). It can also stem from common domain practices, {\em e.g.}, medical guidelines often dictate routinely conducting different diagnostic tests for different age groups. These regulatory constraints and domain-specific practices often result in a complete absence of feature values for certain subgroups, while they remain available for others. In Table~\ref{tab:ReadmissionTable}, systematic missing data is illustrated in the complete absence of cholesterol levels data for sub-groups age 40 and below. Also, whenever data is collected and integrated from multiple jurisdictions, sensitive attributes data may be completely missing in some data sources.

We wish to identify feature subsets (of limited cardinality) {\em that are most informative to a target variable for each sub-group} in the presence of systematic missing data. 
Feature set informativeness is quantified by the joint MI between predicting features and a target variable, following~\cite{salam2019human,mi1,mi2,mi3}. 
We argue that this task could avoid the extra cost of collecting additional data, and without imputing missing values~\cite{impute1, impute2, mi1, qian2015mutual}, which may be ineffective. Instead, we propose to directly estimate feature subsets' MI that are possibly systematically missing in a sub-group, a stark departure from existing works and a fundamental novelty of our work. 
\smallskip \noindent {\bf Challenges.} (1) 
Features often interrelate and inter-depend, yet their effects can manifest differently on the target variable across subgroups. With an exponential number of feature subsets, the 
first challenge involves {\em designing a generalizable model} to exploit feature interdependence across subgroups.\\
\noindent (2)
The MI upward closure property~\cite{salam2019human} states that the joint MI of a feature set (with respect to a target variable) is never greater than that of any of its supersets. Therefore, the designed model should be cognizant of and benefit from the upward closure property when completing missing MI values of feature supersets and subsets. 
\noindent (3) 
Computing MI values over feature subsets requires enumeration over a power set, which may be prohibitively expensive both computing time and storage space wise. So, the challenge is to investigate the opportunity of sharing computation during training.

\smallskip \noindent {\bf Contributions and structure.}
We design a multiplex graph~\cite{hamilton2020graph, genossar2023flexer, yu2022multiplex} (Section~\ref{sec:graphConstruction}) where features in each subgroup are represented by a distinct graph layer, and train a heterogeneous Graph Neural Network (GNN)~\cite{hamilton2020graph,kipf2016semi, zhou2020graph, schlichtkrull2018modeling} over it (Section~\ref{subsection:training}). We cast the MI estimation problem as a graph representation learning task~\cite{he2022webmile}, estimating MI scores for feature combinations. 
The trained model mostly obey the MI upward closure property, 
effectively increasing the model's predictive power and reducing the effort needed for exact MI computation. 
{\em The model is generalizable}, handling both systematic and random missing data, and could be trained on other model agnostic techniques (e.g., feature selection using Pearson correlation~\cite{cohen2009pearson}). This novel model, to the best of our knowledge, has not been devised before. 


Training the GNN is challenged by graph size,  
 with a node cardinality that is the order of a power set of number of features for each of multiple subgroups (Section~\ref{sec:sampling}). 
 We attend to two distinct computational opportunities. First, we exploit {\em the relationship between MI and joint entropy,  and the chain rule of joint entropy} to demonstrate that the MI of any set of features over the power set could be expressed as a linear function of joint entropy and conditional entropy~\cite{cover1991entropy} over a set of constructs. If these constructs are precomputed and materialized, they could be reused repeatedly during training, allowing cost and extensive speed up. Next, we study a sub-problem that determines which feature subsets to compute given a budget of MI calculations per layer. We reason that the selected subset should be the one that are uniform random sample of the distribution of MI of all nodes. We present a lazy random walk based algorithm that is guaranteed to converge to a unique stationary distribution producing uniform random samples over the search space (Section~\ref{sec:sampling}). In Section~\ref{subsection:topk}, we demonstrate a possible use of the trained model to obtain top-$K$ feature sets with maximum MI score.

A thorough empirical analysis, conducted on both synthetic and real-world datasets, corroborates the efficacy of our approach. Specifically, we show: (1) robustness of the proposed model, which remains effective when a large number of features are systematically missing, compared to imputation based, neural network-based, and Markov blanket-based baselines; (2) efficiency of the sampling algorithm, overcoming a major computational bottleneck while satisfying the uniformity requirement; and (3) proposed solution scalability under varying parameters. 
The entire code is publicly available.\footnotemark\footnotetext{\url{https://github.com/BarGenossar/MISFEAT/}}




Section~\ref{sec:Preliminaries} offers necessary background, 
followed by data model and problem definition (Section~\ref{sec:problem_definition}). Related work is covered in Section~\ref{sec:related_work} and Section~\ref{sec:conclusion} concludes the paper.



\section{Preliminaries}
\label{sec:Preliminaries}
In this section we introduce the necessary background on feature selection and MI (Section~\ref{subsection:FeatureSelection}), 
and GNNs (Section~\ref{sec:GNNs}). Table~\ref{tab:symbol} provides a summary of notations. 

\subsection{Feature Selection and Mutual Information}\label{subsection:FeatureSelection}
A feature selection task involves a dataset $\mathbb{D} = \left\lbrace (x_i, y_i) \mid 1\leq i \leq n\right\rbrace $, where $x_i$ is the feature vector of the $i$-th tuple, $y_i$ is its corresponding label (which may be discrete or continuous), and $n$ is the number of tuples in the dataset.
Let 
$F$ denote the initial feature set. 
The objective of feature selection is to identify 
a subset of features $F^m=\{f_1,f_2,\dots,f_m\}\subset F$ (of some fixed, application-dependent size $m$) 
that maximizes the predictive power of a machine learning model with respect to a predefined metric $M$. This metric can either depend on model performance or be model-agnostic,
	and is computed using $\mathbb{D}^m = \left\lbrace(x_i^m, y_i) | 1\leq i \leq n\right\rbrace$, a vertical subset of $\mathbb{D}$ containing only the features in $F^m$, with $x_i^m$ being the projection of the feature vector to the selected features in $F^m$.

\ignore{The feature selection process benefits from using information theory measures
to assess the importance of feature combinations and selecting the final feature set accordingly. }
In this study, we 
select feature subsets that maximize 
MI~\cite{chen2021efficient, Vergara2014, pascoal2017theoretical}, 
commonly used as an indicator of the dependence between two random variables. MI is a model-agnostic feature importance assessment tool that quantifies the amount of information gain of one (or more) random variables when another random variable is observed. Higher MI values indicate greater feature importance. 
{\em Joint MI} of an $m$-size feature set $F^m$ to a target variable $Y$ is defined as

\begin{footnotesize}
	\begin{equation}\label{eq:MI}
		I(F^m; Y) = \sum_{\forall i \in f_1} \sum_{\forall j \in f_2} \ldots  \sum_{\forall m \in f_m} \sum_{y \in \calY} P(i,j\ldots m,y) \log \frac{P(i,j\ldots m, y)}{P(i,j\ldots m)P(y)}
	\end{equation}
\end{footnotesize}
\begin{example}\label{ex2}
Using Table~\ref{tab:ReadmissionTable} we illustrate MI computation between \emph{Family History} and the target variable \emph{Readmission}. The domain of both features is $\{\text{Yes}, \text{No}\}$. The marginal distribution of \emph{Family History} is $P(\text{Yes}) = \frac{2}{3}$, $P(\text{No}) = \frac{1}{3}$, and for \emph{Readmission} is $P(\text{Yes}) = \frac{7}{12}$, $P(\text{No}) = \frac{5}{12}$. The joint distribution of the variables (ordered as before) 
$P(\text{Yes, Yes}) = \frac{1}{2}$, $P(\text{Yes, No}) = \frac{1}{6}$, $P(\text{No, Yes}) = \frac{1}{12}$, $P(\text{No, No}) = \frac{1}{4}$. 
Plugging values into Eq.~\ref{eq:MI} (with $m=1$) yields:
\begin{align}
	& I(\text{Family History}; \text{Readmission}) = \nonumber\\
	& \frac{1}{2}\log \frac{1/2}{(2/3) \cdot (7/12)} + \frac{1}{6}\log \frac{1/6}{(2/3) \cdot (5/12)} \nonumber\\
	& + \frac{1}{12}\log \frac{1/12}{(1/3) \cdot (7/12)} + \frac{1}{4}\log \frac{1/4}{(1/3) \cdot (5/12)} \simeq 0.23 \nonumber
\end{align}
The joint MI of \emph{\{Family History, Body Weight\}} and \emph{Readmission} is computed using the joint distributions of \emph{\{Family History, Body Weight\}} and \emph{\{Family History, Body Weight, Readmission\}}, as well as \emph{Readmission} marginal distribution. 
\end{example}

\begin{footnotesize}
	\begin{table}[t]
		\begin{tabular}{c|l}
			\hline
			\textbf{Symbol} & \textbf{Explanation} \\
			\hline
			$\mathbb{D}$ & Dataset \\
			$F$ & Feature set \\
			$F^{\prime} \subset F$ & Feature subset for defining subgroups \\
			$P=\{p_1,p_2,\dots,p_{|P|}\}$ & Complete set of minterm predicates over $F^{\prime}$ \\
			$\mathbb{D}_i \subset \mathbb{D}$ & A subgroup dataset \\
			$F^S=F\setminus F^{\prime} \subset F$ & candidates for feature selection \\
			${\mathcal F}$ & Powerset of $F^S$ \\
			$F_i^-, F_i^+$ & Systematically missing features and \\
            & complementary subset\\
			${\mathcal F_i^-}, {\mathcal F_i^+}$ & Feature subset of subgroups with (without)\\
			& systematically missing features\\
			$F^m=\{f_1,f_2,\dots,f_m\} \subset F^S$ & Feature subset of size $m$ \\
			$\mathbb{D}^m$ & Vertical subset of $\mathbb{D}$ \\
			$F_i^{m*}$ & Feature subset with highest MI for $\mathbb{D}_i$ \\
			$F_i^{mj}$ & $j$-th top feature subset with $\mathbb{D}_i$'s highest MI \\
			$TopK^m_i$ & $m$-size top-$K$ feature subsets of subgroup $\mathbb{D}_i$ \\
			$B_i$ & budget for subgroup $\mathbb{D}_i$ \\
			$\mathbb{G}_i=\left( \mathbb{V}_i, \mathbb{E}_i\right),\mathbb{G}=\left( \mathbb{V}, \mathbb{E}\right)$ & Subgroup feature lattice graph,\\ & multiple lattice graph\\
			$level_{min},level_{max}$ & lower and upper bounds of sampling and\\
			& prediction levels over $\mathbb{G}$\\
			\hline
		\end{tabular}
		\caption{Table of notations}
		\label{tab:symbol}
	\end{table}
\end{footnotesize}

We conclude with introducing the {\em upward closure of MI} property, which becomes handy in addressing the computational challenges of this work. This property
guarantees that the MI of a feature set $F_1$ is never larger than any of its super-set $F_2$ ($F_1\subseteq F_2$)~\cite{salam2019human, brin1997beyond}:
\begin{equation}
	I(F_1; Y) \leq I(F_2; Y)
\end{equation}

\subsection{Graph Neural Networks}\label{sec:GNNs}
Graphs serve as an effective tool for capturing relationships and interactions among data objects.
GNNs leverage graph structures, serving as a complementary inference framework. They provide advanced capabilities for performing tasks, including node, link, and graph-level predictions, on such graphs.
The fundamental premise of GNNs is the inherent interconnectedness of data objects, enabling information from one entity to influence another~\cite{hamilton2020graph,kipf2016semi,scarselli2008graph, zhou2020graph}.


Nodes in a GNN are instantiated with initial feature vectors by assigning attributes or embedding representations based on node properties or external knowledge. These vectors serve as a starting point for the iterative neural message passing mechanism of the GNN, propagating information across the graph. In each iteration, every node receives vector messages transmitted by its direct neighbors and aggregates them ({\em e.g.}, by using summation, mean, or max pooling) to form a new personal message (hidden representation) for itself. This updated message is sent to neighboring nodes in the subsequent iteration. Such message passing iteration occurs concurrently for all nodes, attributed as a {\em single GNN layer}.

A general formulation for updating the hidden representation of a node $u$ through message passing in a heterogeneous graph structure can be described as follows.
\begin{equation}
	\label{eq:multi_edge_gnn_layer}
	\boldsymbol{h}_u^t = \text{AGGREGATE}\left(\boldsymbol{h}_u^{t-1}, \left\lbrace\left\lbrace \boldsymbol{h}_v^{t-1} \mid v \in \mathcal{N}_r(u) \right\rbrace \forall r \in R \right\rbrace \right)
\end{equation}
\noindent where $u$ is the node of interest, $t$ is the iteration (GNN layer) number, $R$ is a set of node types, $N_{r}(u)$ is node $u$'s neighborhood with respect to node type $r \in R$, $\boldsymbol{h}^{t-1}_v$ is the previous layer hidden representation of the neighbor $v$, and AGGREGATE combines $u$'s previous hidden representation with the representations of its neighbors, incorporating mathematical operations and learnable parameter matrix.

The overall architecture of a GNN consists of several layers stacked together. The output of the last layer consists of the latent representations of nodes (embeddings). 

In a heterogeneous graph (see Section~\ref{sec:algorithm}), with multiple node and edge types, conveying information from neighbors may be differ depending on their type~\cite{schlichtkrull2018modeling}. 
The model captures the diverse structural characteristics of the graph by adaptively aggregating information from different neighbor types. 

Typically, message passing and embeddings generation are performed over all nodes in the graph. For selecting $F^m$, we demonstrate a setting where embeddings are generated only for a subset of the graph, based on the graph structure (Section~\ref{sec:graphConstruction}) and application needs (sections~\ref{subsection:training} and~\ref{subsection:topk}).

\section{Data Model and Problem Definition}
\label{sec:problem_definition}
\ignore{do we need this? reads verbose and it is not even clear what is prediction, MI computation, etc. IMHO, this whole paragraph could be hidden. We now provide the necessary building blocks for our work, and provide a formal specification of the problem of feature selection for multiple subgroups in the presence of systematic missing data, by utilizing MI as a mean to asses feature combination importance. In a nutshell, we aim at striking a balance between the number of MI computations and predictions, where the latter is performed by solving a machine learning prediction task. Clearly, for a feature subset that contains a feature with systematic missing data, we can only resort to prediction. For the remaining feature subsets, we are left with the decision to either compute or predict, trading off cost of explicit computation with accuracy of prediction.}


Let $F^{\prime}\subset F$ be a subset of features. We create subgroups of the dataset using $F^{\prime}$ by replacing $\mathbb{D}$ (Section~\ref{subsection:FeatureSelection}) with $\mathbb{D} = \bigcup_{i=1}^{|P|} \mathbb{D}_i$, where $P$ is a complete set of minterm (conjunction of simple and negated simple) predicates over $\mathbb{D}$, $p_i$ is a conjunctive predicate over the set of features $F^{\prime}$, and $\mathbb{D}_i=\sigma_{p_i}\mathbb{D}$ is a selection over $\mathbb{D}$ according to $p_i$ ($1\leq i\leq |P|$). 
Example~\ref{ex:Readmission_subgroups} uses predicates such as ``Age$<$40 AND Ethnicity=`Asian'.'' 
The feature selection task is performed over ${\mathcal F}=2^{F\setminus F^{\prime}}$, the entire set of feature combinations, excluding the subgrouping features. We use $F^S=F\setminus F^{\prime}$ to denote the feature subset that is used for the feature selection process. 

Feature subsets, elements of ${\mathcal F}$, can be naturally organized in a lattice~\cite{salam2019human}, a $|F^S|$-dimensional hypercube, where nodes represent feature subsets and an edge exists between two nodes with hamming distance of $1$. Subsets in the lattice are arranged in a hierarchical manner based on the number of features included in a combination. The lowest level of the lattice consists of singleton nodes, each representing a single feature. Moving up to the next level, each node represents a pair of features built upon the singletons below. These pairs are formed by considering all possible combinations of two features. This pattern continues for higher levels of the lattice, with each level representing feature combinations of increasing size. As we move up the lattice, the combinations become more complex, incorporating more features from the dataset.
\begin{definition}
Given a subgroup data fragment $\mathbb{D}_i$, we say that feature $f\in F^S$ is {\em systematically missing} in $\mathbb{D}_i$ if 
\begin{equation}
	\forall t\in \mathbb{D}_i, t[f]=NULL
\end{equation} 
\end{definition}
A feature may be systematically missing in one subgroup but not in others. In Example~\ref{ex:Readmission_subgroups}, feature ``Cholesterol'' is systematically missing only for subgroups defined by predicate ``Age$<$40''. 
We denote by $F_i^-$ and ${\mathcal F_i^-}$ the set of systematically missing features and feature subsets that contain one or more systematically missing features for subgroup $\mathbb{D}_i$, respectively. $F_i^+$ and ${\mathcal F_i^+}$ are the complementary set, such that $F^S=F_i^-\cup F_i^+$ and ${\mathcal F}={\mathcal F_i^-}\cup {\mathcal F_i^+}$.


For a subgroup $\mathbb{D}_i$ and a subset of features $\tilde{F}\in {\mathcal F}$, $MI_i^{\tilde{F}}$ is the empirical MI value of $\tilde{F},Y$ with respect to $\mathbb{D}_i$, where $Y$ is the target variable ({\em e.g.}, ``Readmission" in Example~\ref{ex:Readmission_subgroups}). We restrict the selection process to a subset of features of a fixed size $m<|F^S|$, justified by computational complexity of MI for large feature sets. Let ${\mathcal F}^m=\{F^m\in{\mathcal F}\mid |F^m|=m\}$ denote the set of all subset features of size $m$ (level $m$ of the lattice with $|{\mathcal F}^m|=\binom{|F^S|}{m}$ nodes) and $F_i^{m*}$ denote the feature subset that returns the highest empirical MI value for subgroup $\mathbb{D}_i$ of all feature subsets of size $m$. Therefore, 

\begin{equation}\label{eq:bestMI}
	F_i^{m*}=\argmax_{F^m\in {\mathcal F}^m}MI_i^{F^m}
\end{equation}
for a subgroup $\mathbb{D}_i$.
We denote $F_i^{m*}$ over ${\mathcal F}$ by $F_i^{m1}$ and recursively define $F_i^{mj}$ (the $j$-th top feature subset). 
\begin{definition}
	The {\em $m$-size top-$K$ feature subsets of subgroup $\mathbb{D}_i$} $TopK^m_i=\{F_i^{m1},F_i^{m2},\dots,F_i^{mK}\}$ is defined recursively as follows.
	\begin{equation}
			F_i^{m1}=\argmax_{F^m\in {\mathcal F}^m}MI_i^{F^m}
	\end{equation}
	For $1<j\leq K$
	\begin{equation}
			F_i^j=\argmax_{F^m\in {\mathcal F}^m\setminus (\cup_{l=1}^{j-1}F_i^{ml})}MI_i^{F^m}
	\end{equation}
\end{definition}

\begin{problem}
	\label{problem:top_k_fs}
	Given a feature set of size $m$ and an integer $k$, for each subgroup $\mathbb{D}_i$, return $TopK^m_i=\{F_i^{m1},F_i^{m2},\dots,F_i^{mK}\}$.
\end{problem}
\begin{sloppypar}
We wish to identify Top-$K$ sets, each with $m$ features, for each subgroup $\mathbb{D}_i$. Using Examples~\ref{ex:Readmission_subgroups} and~\ref{ex2}, if $k=2$ and $m=3$, then the goal is to find top-$2$ sets, each with $3$ features for the $4$ different subgroups presented in Example~\ref{ex:Readmission_subgroups}.
\end{sloppypar}

A na\"{\i}ve solution to Problem~\ref{problem:top_k_fs} involves enumerating MI values of $\binom{|F^S|}{m}$ feature subsets of size $m$, and sort them for the top-$K$ subsets of each subgroup. 
Such a computation is exponential in $|F^S|$ and therefore quickly becomes computationally expensive, as the number of {\bf candidate} features increases, regardless of $m$. Moreover, whenever $TopK^m_i\cap {\mathcal F_i^-}\not=\emptyset$, we run into a problem of directly computing the MI of feature subsets due to systematic missing data. Therefore, we are forced 
to predict rather than compute the MI value of feature subsets that consist of features with systematic missing data. To summarize, systematic missing data, combined with an exponential search space, guide us towards reducing the number of MI computations, replacing computation with prediction even if a feature subset can be computed directly. 




\section{Never Miss a Feature with \alg}
\label{sec:algorithm}

Equipped with a data model, we introduce \alg, an efficient solution to Problem~\ref{problem:top_k_fs}. We aim at retrieving $TopK_i^m$ through a hybrid approach that involves training a model to predict MI of feature subsets. 
We study this as a graph representation learning problem, framing it as an MI prediction (regression) task to estimate MI scores for feature subsets. 

\algspace leverages a GNN to propagate information over a graph and capture the domain inherent structure and constraints (Section~\ref{sec:graphConstruction}). The algorithm is performed in three main steps. First, we introduce a sampling mechanism over the graph structure, to reduce the MI computation complexity (Section~\ref{sec:sampling}). Once the MI values  of the sampled nodes are computed, \algspace moves to the training phase, with the outcome of 
a model that captures the relationships between feature subsets in and among subgroups (Section~\ref{subsection:training}). The training phase is followed by $TopK^m_i$ computation for all subgroups, a solution to Problem~\ref{problem:top_k_fs} (Section~\ref{subsection:topk}).

\subsection{Feature Lattice Graph Construction}
\label{sec:graphConstruction}
A single lattice graph encapsulates all feature subsets per subgroup, and their dependencies. 
Each feature subset within a subgroup is represented as a node in a graph, such that all feature subsets of a subgroup constitutes the set of nodes.  The lattice provides a systematic way to explore the space of feature subsets, starting from individual features and gradually building up to larger subsets. The edges of this hierarchical structure aids in capturing the upward closure property of MI~\cite{salam2019human} by depicting interrelations between feature subsets. To facilitate information propagation across different subgroups, we connect different lattices. We employ a heterogeneous GNN trained on a node prediction task, specifically predicting the MI score of a feature subset. By leveraging this multiple lattice graph structure, our model captures latent dependencies between feature subsets, both within and across subgroups.


We begin with describing the construction of a single subgroup feature lattice graph (Section~\ref{subsection:single_lattice}), and continue with the generation of a multiple lattice graph (Section~\ref{subsection:multiple_lattice_graph}). 


\subsubsection{\textbf{Subgroup Feature Lattice Graph}}
\label{subsection:single_lattice}
In what follows, we use $F_1$ and $F_2$ to denote two feature subsets in ${\mathcal F}$.
\begin{definition}[Subgroup feature lattice graph] \label{def:SFLGraph} 
	Given a subgroup $\mathbb{D}_i$, the {\em subgroup feature lattice graph} is an undirected graph $\mathbb{G}_i=\left( \mathbb{V}_i, \mathbb{E}_i\right)$, where 
	\begin{compactitem}
		\item (nodes:) $\mathbb{V}_i={\mathcal F}\setminus\emptyset$
		\item (edges:) $\mathbb{E}_{i}=\mathbb{E}^{Inter}_{i}\cup\mathbb{E}^{Intra}_{i}$ such that
		\begin{compactitem}
			\item (inter-level edges:) $(v_i,v_j)\in\mathbb{E}^{Inter}_{i}$ if (1) $\{v_i,v_j\}\subset\mathbb{V}_i$; (2) $v_i=F_1$; (3) $v_j=F_2$; (4)  $|F_1|-|F_2|=1$; and (5) $F_1\subset F_2$
			\item (intra-level edges:) $(v_i,v_j)\in\mathbb{E}^{Intra}_{i}$ if (1) $\{v_i,v_j\}\subset\mathbb{V}_i$; (2) $v_i=F_1$; (3) $v_j=F_2$; (4) $|F_1|=|F_2|=\ell$; (5) $|F_1\cap F_2|=\ell-1$; and (6) $|F_1\cap F_2|>0$
		\end{compactitem}
	\end{compactitem}
\end{definition}

We initiate a $1:1$ 
mapping between feature combinations and their respective representations using binary encoding. In this scheme, each element within the binary vector corresponds to a distinct feature in the dataset. When a feature is included in a particular combination, the corresponding element in the binary vector is assigned with a value of $1$; otherwise, it retains a value of $0$. With each element in the vector aligned to a specific feature, the dimensionality of these vectors is 
$|F^S|$.

This binary encoding offers a concise and informative feature combinations representation, facilitating efficient processing within the framework of GNN. It establishes an indispensable association between feature combinations and the hierarchical levels within the lattice. At each level, the number of $1$'s in a vector mirrors the level number, reflecting the hierarchical nature of the lattice and its alignment with the encoded feature combinations.

\begin{example}
\label{ex:lattice_graph_nodes}
Figure~\ref{fig:lattice_graph} illustrates a lattice of the feature set: $F^S=\left\lbrace f_0, f_1, f_2, f_3 \right\rbrace$.
At the first level of the lattice, each binary representation contains a single $1$ element, corresponding to the underlying feature that forms the singleton combination, while all other elements are $0$. For example, the binary vector $0001$ represents the combination $\left\lbrace f_0\right\rbrace $, and $0100$ represents $\left\lbrace f_2\right\rbrace $.
Moving up to the second level, each feature combination is now denoted by a vector with two 1's. For example, the combination $\left\lbrace f_0, f_2 \right\rbrace $ is encoded by the vector $0101$. On top of the lattice lies the vector $1111$ that represents the feature combination of all available features.

\begin{figure}[htpb]
	\centering
	\includegraphics[width=0.9995\columnwidth]{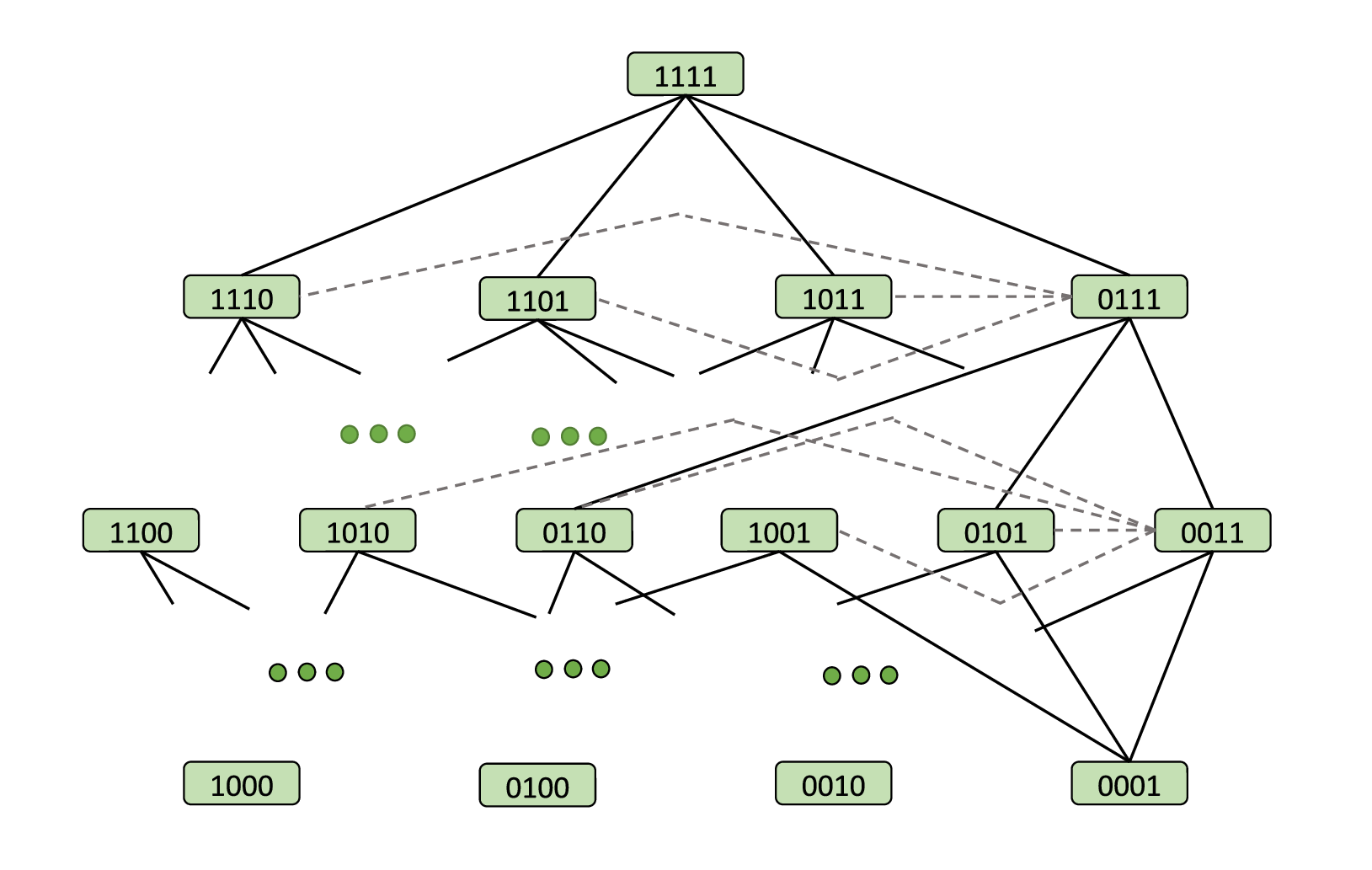}
	\caption{An Illustration of a single, four level lattice graph.}
	\label{fig:lattice_graph}
\ifTechReport
\vspace*{-0.25cm}
\fi
\end{figure}
\end{example}

The lattice hierarchical structure is encoded in the subgroup feature lattice graph as {\em inter-level edges}. These edges, connecting subsumed feature subsets with cardinality difference of $1$, ensure the learning abides by the upward closure of MI property (Section~\ref{sec:Preliminaries}). 

In addition, the graph contains {\em intra-level edges}, connecting nodes in the same level of the lattice (cardinality difference of $0$). These edges ensure that when the model learns about a feature subset, then other feature subsets that overlap with the former set benefit from this learning. We establish intra-level edges between feature subsets that differ by a single feature. 
\begin{example}
\label{ex:lattice_graph_edges}
Consider, again, Figure~\ref{fig:lattice_graph}. The gray dashed lines form a partial set of the intra-level edges.  For example, the node represented by the vector $0011$ is connected to all other feature combinations of size $2$ that include either $f_0$ or $f_1$: $0101, 1001, 0110, 1010$. The black solid lines are the inter-level edges. For illustration of the subsumption-based connection scheme, consider the node $0111$, representing the feature combination $f_0, f_1, f_2$. This feature combination subsumes three feature combinations of size $2$: $\left\lbrace f_0, f_1\right\rbrace, \left\lbrace f_0, f_2\right\rbrace, \left\lbrace f_1, f_2\right\rbrace$. Hence, the node $0111$ is connected by a solid line to their corresponding nodes, represented by the vectors $0011$, $0101$ and $0110$, respectively.
\end{example}

\begin{lemma}\label{lemma:exp-size}
The size of the subgroup feature lattice graph $\mathbb{G}_i=\left( \mathbb{V}_i, \mathbb{E}_i\right)$ is exponential to the number of features both in terms of the number of nodes $|\mathbb{V}_i|$ and the number of edges $|\mathbb{E}_i|$.
\end{lemma}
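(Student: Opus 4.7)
The plan is to establish both halves of the claim by directly enumerating from Definition~\ref{def:SFLGraph}. The node count is immediate: since $\mathbb{V}_i={\mathcal F}\setminus\emptyset$ and ${\mathcal F}=2^{F^S}$, we get $|\mathbb{V}_i|=2^{|F^S|}-1$, which is already exponential in $|F^S|$. I would dispatch this in one sentence.

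The substance lies in the edge count, and here my plan is to lower-bound $|\mathbb{E}_i|$ by the inter-level contribution $|\mathbb{E}^{Inter}_i|$ alone, which is already exponential, so the intra-level edges only strengthen the bound. Fix a feature subset $F$ of size $\ell$; by clauses (4)--(5) of the inter-level rule, its inter-level neighbors are exactly the size-$(\ell+1)$ supersets of $F$ (there are $|F^S|-\ell$ of them) together with the size-$(\ell-1)$ subsets of $F$ (there are $\ell$ of them). Counting each inter-level edge once at its smaller endpoint yields
\[
|\mathbb{E}^{Inter}_i|=\sum_{\ell=0}^{|F^S|-1}\binom{|F^S|}{\ell}\bigl(|F^S|-\ell\bigr).
\]
Applying the standard identity $\sum_{k=0}^{n}k\binom{n}{k}=n\cdot 2^{n-1}$ (or equivalently $\sum_{\ell=0}^{n}(n-\ell)\binom{n}{\ell}=n\cdot 2^{n-1}$) to the right-hand side gives $|\mathbb{E}^{Inter}_i|=|F^S|\cdot 2^{|F^S|-1}$, exponential in $|F^S|$.

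Because $|\mathbb{E}_i|\ge |\mathbb{E}^{Inter}_i|$, this already proves the edge half of the lemma. For completeness I would also remark on an upper bound of the same order: each size-$\ell$ node has at most $\ell(|F^S|-\ell)$ intra-level neighbors (pick one feature in $F$ to remove and one feature outside $F$ to add), so summing and halving yields $|\mathbb{E}^{Intra}_i|=\tfrac{1}{2}\sum_{\ell=1}^{|F^S|}\binom{|F^S|}{\ell}\ell(|F^S|-\ell)=O\bigl(|F^S|^{2}\cdot 2^{|F^S|}\bigr)$; combined with the inter-level count this sandwiches $|\mathbb{E}_i|$ between $\Omega(2^{|F^S|})$ and $O(|F^S|^{2}\cdot 2^{|F^S|})$.

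There is no deep obstacle here; the proof is a straightforward application of binomial identities. The one pitfall I would be careful about in the writeup is the double-counting issue: inter-level edges must be counted once per unordered pair (which is why I fix the smaller endpoint), and intra-level edges must be halved because each neighbor pair $(F_1,F_2)$ differing by a swap is generated twice (once from each side). Stating these conventions explicitly before invoking the identities is what keeps the argument clean.
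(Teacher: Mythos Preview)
Your approach is essentially the paper's: enumerate nodes and edges level by level and close the sums via $\sum_{k}k\binom{n}{k}=n\,2^{n-1}$ (the paper obtains the same identity by differentiating $(1+x)^n$). The only substantive difference is organizational: you argue by sandwiching $|\mathbb{E}_i|$ between an inter-level lower bound and an inter+intra upper bound, whereas the paper computes exact closed forms for both edge types separately.

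Two small index slips to correct before writeup. First, your inter-level sum should start at $\ell=1$, not $\ell=0$, since $\emptyset\notin\mathbb{V}_i$; this yields $|\mathbb{E}^{Inter}_i|=|F^S|\cdot 2^{|F^S|-1}-|F^S|$, matching the paper's exact count. Second, your intra-level sum should start at $\ell=2$: clause~(6) of Definition~\ref{def:SFLGraph} requires $|F_1\cap F_2|>0$, so singletons have no intra-level edges and the $\ell=1$ term is spurious. Neither slip affects the exponential conclusion, but since the paper advertises that the proof ``offers an exact computation for the number of nodes and edges,'' it is worth getting the boundary indices right.
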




\ifTechReport
\begin{proof}
\sloppy
Recall that each level in the lattice graph corresponds to the size of feature combinations reside in it, ranging from individual features to the full set. Specifically, at level $\ell$, the number of nodes is determined by the binomial coefficient, $\binom{|F^S|}{\ell}$. 
Summing 
across all levels yields $2^{|F^S|} - 1$ nodes, which is exponential to $|F^S|$.

An inter-level edge is created based on subsumption. A node with $\ell$ features is connected to each of its proper subsets of size $\ell-1$. The number of such combinations is $\binom{\ell}{\ell-1}=\ell$. Therefore, the total number of inter-level nodes is $\sum_{\ell=2}^{|F^S|} \binom{|F^S|}{\ell} \cdot \ell$. Note that the summation starts from $2$ since nodes of the first level are not connected by an inter-level edge to any other subsumed node by Definition~\ref{def:SFLGraph}. 
To reach a closed-form expression, we recall that
$\left( 1+x\right)^{|F^S|} = \sum_{\ell=0}^{|F^S|}\binom{|F^S|}{\ell} \cdot x^{\ell}$. Differentiating both sides w.r.t. $x$ yields: $|F^S| \cdot \left( 1+x\right)^{|F^S|-1} = \sum_{\ell=0}^{|F^S|}\binom{|F^S|}{\ell} \cdot \ell \cdot x^{\ell - 1}$. After setting $x=1$ we get: $|F^S| \cdot  2^{|F^S|-1} = \sum_{\ell=0}^{|F^S|}\binom{|F^S|}{\ell} \cdot \ell$. We note that the term with $k=0$ does not contribute to the sum because it evaluates to $0$. Since we are interested in the sum beginning from $2$ we subtract the first element in the sum ($|F^S|$), obtained with $\ell=1$, from both sides and get: $|F^S| \cdot 2^{|F^S|-1} - |F^S| = \sum_{\ell=2}^{|F^S|}\binom{|F^S|}{\ell} \cdot \ell$. The right-hand side is exactly the expression describing the number of inter-level edges. The left-hand side can be simplified and rewritten as $\frac{|F^S|}{2} \left( 2^{|F^S|} - 2\right)$, which is exponential in $|F^S|$.

The number of intra-level edges is determined based on overlap. A node in $\ell$, representing a certain combination, is connected to all other nodes within its level that share $\ell-1$ common features with it, and differ only by a single feature. Every node of interest in level $\ell$, contains $\binom{\ell}{\ell-1}=\ell$ combinations of feature subsets, each of size $\ell-1$. Corresponding to each such combination of size $\ell-1$, there are $|F^S|-\ell$ nodes that differ from the node of interest by a single feature.
This logic dictates the following formula for the number of intra-level edges:  $\sum_{\ell=2}^{|F^S|} \binom{|F^S|}{\ell} \cdot \frac{\ell \cdot \left( |F^S|-\ell\right)}{2}$, wherein the divisor $2$ ensuring that each edge is not counted twice.
Thus, the number of intra-level edges is also exponential to $|F^s|$.
\end{proof}
\fi

The proof of Lemma~\ref{lemma:exp-size}, 
\ifNotTechReport
given in a technical report,\footnotemark\footnotetext{\url{https://github.com/BarGenossar/MISFEAT/blob/main/Technical\%20Report.pdf/}}
\fi
offers an exact computation for the number of nodes and edges in a subgroup feature lattice graph, beyond substantiating the exponential nature of the graph. Our empirical evaluation (Section~\ref{sec:experiments}) shows that the number of edges has a low impact on computation, as long as the overall graph size can be stored in memory as a whole. However, the number of nodes, and in particular the need to compute MI values for a large number of nodes, has a significant impact on the overall algorithmic solution performance. Consequently, we opt to limit the number of layers over which the proposed algorithm iterates (Section~\ref{subsection:training}) and to sample nodes for MI computation (Section~\ref{sec:sampling}).

\subsubsection{\textbf{Multiple Lattice Graph}}
\label{subsection:multiple_lattice_graph}
The subgroup feature lattice graph, $\mathbb{G}_i$, forms a key component in our learning framework, enabling flexibility in learning and prediction by treating identical feature subsets based on the contextual differences (subgroups). 
Next, we define a multiple lattice graph, generated by interconnecting the subgroup feature lattice graphs. The resulting structure is a multiplex graph, a special type of heterogeneous graph, 
allowing for the representation of a concept (in our case, feature subset) in different contexts (subgroups).
Our multiplex graph incorporates all nodes and edges from the subgroup feature lattice graphs, complemented by a collection of inter-lattice edges.
\begin{definition}[Multiple lattice graph]\label{def:MLG}  $\mathbb{G}=\left( \mathbb{V}, \mathbb{E}\right)$, where:
\begin{compactitem}
	\item (nodes:) $\mathbb{V} = \bigcup_{i=1}^{|P|} \mathbb{V}_i$
	\item (edges:) $\mathbb{E} = \left( \bigcup_{i=1}^{|P|} \mathbb{E}_i\right) \cup \left( \bigcup_{\substack{i,j=1 \\ i \neq j}}^{|P|} \mathbb{E}_{i,j}\right)$ such that $(v_1,v_j)\in\mathbb{E}_{i,j}$ if (1) $v_i\in\mathbb{V}_i$; (2) $v_j\in\mathbb{V}_j$; (3) $v_i=F_1$; (4) $v_j=F_2$; and (5) $F_1=F_2$ 
\end{compactitem}  
\end{definition}


Pairs of lattices are connected by linking nodes of the same feature subsets. Therefore, between any two subgroup feature lattice graphs (and we have $\binom{|P|}{2}$ such pairs) we create at most $|{\mathcal F}| - 1$ edges. This approach ensures a comprehensive integration of information between diverse subgroups.


$\mathbb{G}$ is a heterogeneous graph, representing a comprehensive information integration between diverse subgroups. The graph encompasses various types of edges, reflecting the different relationships between subgroups. The inter-lattice edges correspond to connections between nodes from different subgroup pairs. It is important to note that each subgroup pair results in a unique edge type within this set. Conversely, the in-lattice edges signify relationships within individual subgroups, with each subgroup inducing its distinct edge type within this set.

	\vskip-.1in
\begin{figure}[htpb]
	\centering
	\includegraphics[width=0.9995\columnwidth]{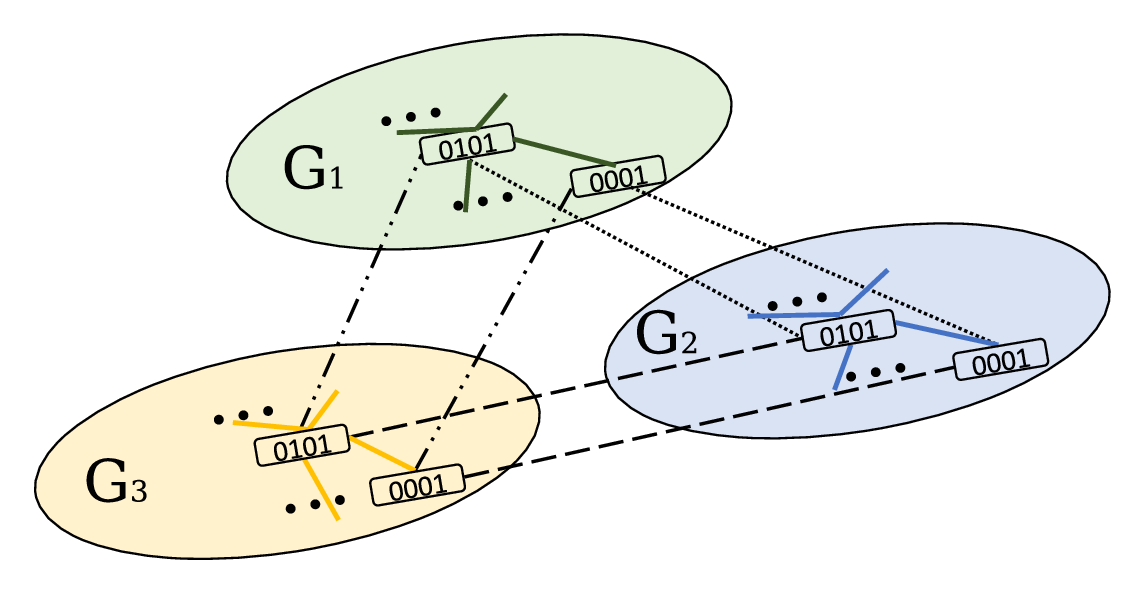}
	\vskip-.1in
	\caption{An Illustration of a multiple lattice graph structure.}
	\vskip-.1in
	\label{fig:multiple_lattice_graph}
\end{figure}

\begin{example}
	\label{ex:multiple_lattice_graph}
Figure~\ref{fig:multiple_lattice_graph} illustrates a heterogeneous graph constructed from three subgroups, each represented by a separate lattice graph ($\mathbb{G}_1, \mathbb{G}_2$, and $\mathbb{G}_3$). In this example, we consider a simplified scenario where only two nodes, namely 0001 and 0101, are depicted for each lattice. The interconnections between the lattices signify the relationships between different subgroups. Each connection is denoted by a different edge shape, reflecting the diversity among edge types, which varies based on the subgroup pairs involved. A node is directly connected to all other nodes representing the same feature combination across the remaining lattice graphs. Solid lines characterize the in-lattice edges, with each line uniquely colored to denote its lattice origin, thus illustrating the distinct edge types represented within each lattice.
\end{example}



\subsection{Efficiency Opportunities}
\label{sec:sampling}
Recall 
that the size of the subgroup feature lattice graph $\mathbb{G}_{i}=\left( \mathbb{V}_{i}, \mathbb{E}_{i}\right)$ as well as the multiple lattice graph $\mathbb{G}=\left( \mathbb{V}, \mathbb{E}\right) $, are exponential to the feature set size (Lemma~\ref{lemma:exp-size}). 
Even though done once, training such a gigantic heterogeneous network is prohibitively expensive both computationally and storage space-wise. We study two distinct efficiency opportunities in the training process, as discussed next.

\subsubsection{Pre-computing and Sharing Computation of MI in $\mathbb{G}_{i}$}
\label{sec:MI-sharing}
As Figure~\ref{fig:lattice_graph} illustrates, each subgroup feature lattice graph represents the power set of features, each being a node. A na\"ive implementation 
involves running an exponential number of computations, where for a node with $m$ features there is a need for a nested loop of size $m+1$ for computing MI with the target variable (Eq.~\ref{eq:MI}). 

We observe that for each $\mathbb{G}_{i}$, inter-level edges connect nodes that are related through subsumption. For example, node represented by the vector $0011$ is a superset of nodes $0001$ and $0010$. 
We therefore argue that the na\"ive implementation contains redundant computations that can be eliminated by exploiting relationship between joint entropy and MI and the chain rule of joint entropy~\cite{cover1991entropy}. 

MI of a set of $x$ random variables (in our case, $x-1$ features and one target variable) could be represented as follows, where $H(\cdot)$ represents the joint entropy~\cite{cover1991entropy} of the random variables involved in the calculation.
	\begin{equation}\label{eq:joint}
	\begin{split}
		I(A_1,A_2,\ldots,A_x)  = &\Sigma_{i=1}^x H(A_i) - \Sigma_{1\leq i <j \leq x}H(A_i,A_j)+ \\
		& \Sigma_{1\leq i <j < k \leq x}H(A_i,A_j,A_k)+ \ldots +\\
		& (-1)^{x+1}H(A_1,A_2,\ldots,A_x)
	\end{split}
	\end{equation}
Basically, Eq.~\ref{eq:joint} demonstrates that the MI of $x$ random variables could be expressed as a linear relationship of joint entropy involving all possible subsets of these $x$ variables.
Then, the chain rule of joint entropy states
	\begin{equation}\label{eq:chainrule}
	\begin{split}
		H(A_1,A_2,\ldots,A_x)= & H(A_1,A_2,\ldots,A_{x-1})+\\
		&H(A_x|A_1,A_2,\ldots,A_{x-1})
	\end{split}
	\end{equation}
 Using Eq.~\ref{eq:chainrule}, $H(A_1,A_2)=H(A_1)+H(A_2|A_1)$. The generalized form says that the joint entropy  of a set of $x$ random variables could be computed by simply adding  two constructs: a) joint entropy of any of those $x-1$ random variables, and b) the conditional entropy of the remaining extra variable, conditioned on the same subset of $x-1$ variables. The order of the random variables does not matter in this process.

Given $x$ features, we pre-compute a set of $2^x$ constructs of joint and conditional entropy: $x$ constructs for $x$ entropy values of the individual features, $\binom{x}{2}$ conditional entropy constructs  needed to compute joint entropy of all size $2$ feature set, {\em e.g.}, $H(A_2|A1), H(A_3|A_1),\ldots, H(A_x|A_1)$, because $H(A_iA_j)=H(A_i)+H(A_j|A_i)$, $\binom{x}{3}$ conditional entropy constructs needed to compute joint entropy of all size $3$ feature set, {\em e.g.}, $H(A_3|A_1,A_2)$,  $\ldots$, $\binom{x}{x-1}$ conditional entropy constructs needed to compute joint entropy of all size $(x-1)$ feature set. The training process simply adds and subtracts (Eq.~\ref{eq:joint}) these pre-computed constructs to compute MI of any feature set with the target variable. 

To understand the extent of computational saving, consider some level $i+1$ of the lattice. To compute MI of a feature set of size $i+1$, we reuse pre-computed constructs of conditional entropy of size $i$, adding to it the joint entropy of feature set of size $i$ (which was computed in the previous step) using a single addition operation. This is in contrast to a na\"ive computation that requires repeated computation of conditional entropy. 

\subsubsection{Sampling Subgroup Feature Lattice Graph} 
Given a subgroup feature lattice graph $\mathbb{G}_{i}=\left( \mathbb{V}_{i}, \mathbb{E}_{i}\right)$, we next study how to identify a set $\mathbb{\tilde{V}}_i \subset \mathbb{V}_{i}$ of nodes of size $B_i$ (a budget parameter). 
With infinite computing resources and capabilities, the different feature subsets present in $\mathbb{V}_i$ would have produced a distribution of MI ($PDF_{MI}(\mathbb{V}_i)$) with the target variable $Y$. Ideally, if computation and storage were not bottlenecks, one would retain the whole $\mathbb{G}_{i}=\left( \mathbb{V}_{i}, \mathbb{E}_{i}\right)$. Under a budgetary constraint, one natural goal of sampling is thus to retain those $\mathbb{\tilde{V}}_i$ such that if one creates a MI distribution using $\mathbb{\tilde{V}}_i$ that distribution should be as close as possible to the MI distribution of $\mathbb{G}_{i}=\left( \mathbb{V}_{i}, \mathbb{E}_{i}\right)$. However, computing the MI distribution of $\mathbb{G}_{i}=\left( \mathbb{V}_{i}, \mathbb{E}_{i}\right)$ is infeasible in the first place. We therefore wish to generate a set of sampled nodes such that the distribution of their MIs ($PDF_{MI}(\mathbb{\tilde{V}}_i)$) is a uniform representative of $\mathbb{G}_{i}$ and design a highly efficient solution that gives theoretical guarantees towards that.

   \begin{problem}[Sampled subgraph node selection]
\label{problem:sampling size}
	Given a budget $B_i$ for subgroup $\mathbb{D}_i$ 
    identify $\mathbb{\tilde{V}}_i$ of size $B_i$ such that $PDF_{MI}(\mathbb{\tilde{V}}_i)$ is a uniform random sample of $PDF_{MI}(\mathbb{V}_i)$.
\end{problem}
\begin{algorithm}
	\caption{Sampling Algorithm \algsample}\label{alg:samp}
	\begin{algorithmic}[1]
		\Require $B_i$
		\State $\mathbb{\tilde{V}}_i=v_1$ /* $v_1$ is generated by setting each $j \in F_i^+$ to $1$ or $0$ uniformly at random and independently of all others. All bits in $F_i^-$ are assigned to $0$  */\label{alg:initial}
		\State $c=2$
		\While{$c \leq B_i$} \label{alg:round}
		\State With probability $1/2$, goto~\ref{alg:increase} \label{alg:stay}
		\State With probability $1/2$, create $v_c$ from $v_{c-1}$ by  \label{alg:move1}\\
		\quad \quad flipping the $j$-th bit, chosen uniformly from $F_i^+$ \label{alg:move1a}
		\State $\mathbb{\tilde{V}}_i=\mathbb{\tilde{V}}_i\cup v_c$ \label{alg:move2}
		\State If $|\mathbb{\tilde{V}}_i|=c$ then $c=c+1$ \label{alg:increase}
		\EndWhile
	\end{algorithmic}
\end{algorithm}

 \algsample (Algorithm~\ref{alg:samp}) 
 samples using a lazy random walk~\cite{levin2017markov} on the feature 
 $|F^S|$-dimensional hypercube, a walk that starts at a random node in the hypercube that is computable, that is a node that does not contain features with systematically missing data. 
 Recall the binary encoding of a feature subset (Section~\ref{subsection:single_lattice}). We generate an initial node representation by setting  each of the $F_i^+$ feature values to $1$ or $0$ uniformly at random and independently of all others (Line~\ref{alg:initial}). Then, the random walk works iteratively until the budget is consumed. With $1/2$ probability it stays at the same node (and the same level) of the hypercube 
 as it is currently (Line~\ref{alg:stay}). This step is needed to avoid cycles and to ensure convergence. With probability $1/2$, it chooses a feature in $F_i^+$ and flips its current value, allowing the walk to go to one of the neighboring computable nodes, moving either one level up or one level down (lines~\ref{alg:move1} and~\ref{alg:move1a}). If this subset 
 has not been seen before, it is added to the sample (Line~\ref{alg:move2}). The process ends when $B_i$ samples are collected. 
 



\algsample takes exactly $|F^S|$ time to decide the first sample. 
Then, the choice of finding each subsequent sample takes constant time, but it does not guarantee a new 
 subset. Therefore, running time of \algsample can be bounded from below by $\Omega(|F^S|+B_i)$.


\begin{lemma}
   \algsample produces a uniform random sample of $PDF_{MI}(\mathbb{V}_i)$. 
\end{lemma}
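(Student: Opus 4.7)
The plan is to analyze Algorithm~\ref{alg:samp} as a lazy random walk on a hypercube and invoke the fundamental theorem of finite Markov chains. Let $d = |F_i^+|$. Because Line~\ref{alg:initial} permanently fixes every coordinate in $F_i^-$ to $0$ and Line~\ref{alg:move1a} only ever flips coordinates in $F_i^+$, the walk is confined to the set $S \subseteq \mathbb{V}_i$ consisting of the $2^d$ computable feature subsets, which is exactly the vertex set of a $d$-dimensional hypercube $Q_d$ whose edges connect subsets differing in exactly one coordinate of $F_i^+$. This is the set over which we wish to sample, since nodes outside it contain systematically missing features and do not contribute a well-defined empirical MI to $PDF_{MI}(\mathbb{V}_i)$.

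First I would write down the one-step transition kernel $P$ on $S$ induced by lines~\ref{alg:stay}--\ref{alg:move1a}: for every $u \in S$, $P(u,u) = \tfrac{1}{2}$ and $P(u,v) = \tfrac{1}{2d}$ whenever $u,v \in S$ differ in exactly one coordinate of $F_i^+$, with $P(u,v)=0$ otherwise. Three properties then follow directly. (i) \emph{Symmetry}: $P(u,v) = P(v,u)$, since flipping one coordinate is an involution and the choice of coordinate is uniform; hence $P$ is doubly stochastic. (ii) \emph{Irreducibility}: any $v\in S$ can be reached from any $u \in S$ by flipping, one at a time, the (at most $d$) coordinates on which they disagree, and each such single-flip transition has positive probability. (iii) \emph{Aperiodicity}: the self-loop of probability $\tfrac{1}{2}$ at every state (the ``laziness'' introduced in Line~\ref{alg:stay}) gives $\gcd\{t: P^t(u,u)>0\}=1$. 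This laziness is essential because the non-lazy walk on $Q_d$ is bipartite, hence periodic with period $2$, and would fail to converge.

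Because $P$ is symmetric, the uniform distribution $\pi(u) = 2^{-d}$ on $S$ satisfies the detailed-balance equations $\pi(u)P(u,v) = \pi(v)P(v,u)$, so $\pi$ is stationary; irreducibility makes it the \emph{unique} stationary distribution. The fundamental theorem of finite Markov chains, combined with aperiodicity, then gives $\lim_{t\to\infty} P^t(u,\cdot) = \pi$ for every starting state $u \in S$. The starting state in Line~\ref{alg:initial} is itself drawn uniformly over $S$ (each of the $d$ bits of $F_i^+$ is flipped independently with probability $1/2$), which in fact places the chain immediately in its stationary distribution, so every visited node $v_c$ has marginal distribution $\pi$. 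Consequently each element added to $\mathbb{\tilde{V}}_i$ in Line~\ref{alg:move2} is uniform over $S$, and therefore its MI value is a uniform random draw from $PDF_{MI}(\mathbb{V}_i)$ restricted to computable nodes, which is the target distribution of Problem~\ref{problem:sampling size}.

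The main subtlety, and the step I would be careful about, is the interpretation of ``uniform random sample.'' Consecutive states of a random walk are not independent, so the claim should be read as marginal uniformity of every retained sample (with dependence decaying geometrically in the spectral gap), rather than i.i.d. uniformity. I would make this point explicit and, if a stronger independence-style guarantee is wanted, invoke the well-known $O(d \log d)$ mixing time of the lazy walk on $Q_d$ (Diaconis--Shahshahani) to argue that even had the algorithm started from an arbitrary computable node, the chain would be $\varepsilon$-close to $\pi$ in total variation after $O(d\log d + d\log(1/\varepsilon))$ steps. Starting from the uniform initialization already sidesteps this issue, which is why I would highlight Line~\ref{alg:initial} as doing double duty: it ensures both that $v_1$ lies in $S$ (no missing features) and that the chain is in stationarity from time $1$.
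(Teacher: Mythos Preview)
Your proposal is correct and follows essentially the same approach as the paper: the paper's proof is a brief sketch noting that \algsample\ induces an aperiodic, irreducible Markov chain whose unique stationary distribution is uniform on the hypercube, and you flesh this out rigorously with the explicit transition kernel, detailed balance, and the observation that Line~\ref{alg:initial} already initializes at stationarity. Your version is in fact more careful than the paper on two points---you correctly restrict to the $|F_i^+|$-dimensional hypercube (the paper loosely says $|F^S|$-dimensional), and you flag the marginal-versus-i.i.d.\ distinction---but these are refinements of the same argument rather than a different route.
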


\begin{proof}
    (Sketch.) It could be shown that \algsample induces a Markov chain that is aperiodic and irreducible~\cite{mc1,mc2}, converging to its unique stationary distribution, which is known to produce a uniform distribution over the $|F^S|$  dimensional hypercube~\cite{mc1,mc2}. \algsample hence produces a uniform random sample. 
\end{proof}


\subsection{Training a Model using $\mathbb{G}$}\label{subsection:training}
Each subgroup feature lattice graph $\mathbb{G}_i$ in the multiplex graph $\mathbb{G}$ dictates a different node type (see Section~\ref{subsection:multiple_lattice_graph}).
This heterogeneity is reflected in our message passing scheme wherein distinct parameter matrices are learned for distinct node type pairs, allowing for specialized information propagation within the multiplex graph.

We adapt GraphSage~\cite{hamilton2017inductive} for heterogeneous graphs. 
The aggregated neighborhood message of $v_{i} \in \mathbb{V}_i$ at layer $t$ is 
\begin{equation}
	\label{eq:graphsage_neighbors}
	\boldsymbol{h}^t_{N \left(v_{i} \right)} = \left( \frac{\boldsymbol{W}^t_{i}}{|N_{i}\left(v_{i}\right)|} \sum_{u_i \in N_{i}\left(v_{i}\right)} \boldsymbol{h}_{u_i}^{t-1} + \sum_{\substack{j=1 \\ j \neq i}}^{|P|} \boldsymbol{W}^t_{j,i} \boldsymbol{h}_{v_{j}}^{t-1}\right)
\end{equation}
where $N_{i}\left(v_{i}\right)$ denotes the neighboring nodes of $v_{i}$ within the same subgroup lattice graph $\mathbb{G}_i$. Both $\boldsymbol{W}^t_{i}$ and $\boldsymbol{W}^t_{j,i}$ are learnable weight matrices of the $t$-th GNN layer. The former refers to message aggregation from neighbors within $\mathbb{G}_i$, while the latter to message aggregation from a corresponding node $v_{j}$ of $\mathbb{G}_j$.

To yield the final node representation of $v_{i}$ at layer $t$, the aggregated neighborhood message is concatenated to the representation of $v_i$ from the previous layer, and the resulted concatenated vector is then multiplied with another trainable weight matrix as follows:
\begin{equation}
	\label{eq:graphsage_node_representation}
	\boldsymbol{h}^t_{v_{i}} = \sigma \left( \boldsymbol{W}^t_{conc} \cdot \left[\boldsymbol{h}^{t-1}_{v_{i}}\ ||\ \boldsymbol{h}^t_{N \left(v_{i} \right)} \right] \right)
\end{equation}
where $\sigma$ is an activation function (\emph{e.g.,} ReLU), $||$ is concatenation operator and $\boldsymbol{W}^t_{conc}$ is a fully connected layer.

The final node representation, derived after a predefined number of message passing iterations, 
is fed into a regression head, implemented as a fully connected neural network with \emph{Mean Squared Error} (MSE) loss function, which predicts node MI scores.

Unlike inductive learning, where the model is trained on a subset of nodes and is expected to generalize to unseen nodes in the same graph or similar graphs, \alg's transductive learning involves leveraging information from the entire graph (possibly restricted, for efficiency purposes, to levels between $level_{min}$ and $level_{max}$) during training to make predictions for all nodes, including those not seen during training~\cite{hamilton2020graph}. In our case, only some nodes are labeled with an MI score due to either systematic missing data or sampling policy (Section~\ref{sec:sampling}). Labeled nodes serve as input to the MSE loss computation, yet unlabeled nodes also participate in the message passing process.

A separate GNN model is generated for each subgroup. This strategy entails utilizing the entire multiplex graph structure for each model but restricting the loss computation to nodes belonging to a single subgroup at a time. The rationale behind this strategy is to allow each GNN model to focus solely on the specific characteristics and relationships within its corresponding subgroup, while still being influenced by nodes from other subgroups. By training separate models for each subgroup, we aim to enhance the model's ability to capture the nuanced patterns and dependencies unique to each subgroup, ultimately leading to more accurate predictions.  

\begin{algorithm}
	\caption{GNN Training Epoch}\label{alg:gnn_training_epoch}
	\begin{algorithmic}[1]
		\Require ${\mathbb{G}}$: A multiple lattice graph, with $\tilde{{\mathbb{V}}}=\bigcup_{i=1}^{|P|}\tilde{{\mathbb{V}_i}}$ nodes labeled with ground truth MI values $\tilde{\boldsymbol{MI}}^{GT}$
		\Require $\boldsymbol{\Theta}$: Model parameters (weight matrices)
		\Require $T$: Number of GNN layers
		
		\For{$i = 1$ to $|P|$}
		\For{$t = 1$ to $T$}
		\For{each node $v_i \in \mathbb{V}$}
		\State Compute $\boldsymbol{h}^t_{v_{i}}$ using Eq.~(\ref{eq:graphsage_neighbors}) and Eq.~(\ref{eq:graphsage_node_representation})\label{alg:representation}
		\EndFor
		\EndFor
		\For{each node $v_i \in \tilde{\mathbb{V}}_i$}
		\State $\hat{\boldsymbol{MI}}_{v_i} = \boldsymbol{W}_{i}^{out} \boldsymbol{h}^T_{v_{i}}$ \label{alg:predictMI}
		\EndFor

  \smallskip

      \State  $\mathcal{L}=\displaystyle \frac{1}{|\tilde{\mathbb{V}}_i|} \left \lVert \hat{\boldsymbol{MI}}_{v_{i}} - \tilde{\boldsymbol{MI}}^{GT}_{v_i} \right \rVert_2^2$
       \label{alg:lossComputation}
		
		\State Update $\boldsymbol{\Theta}_i$ based on the gradient of $\mathcal{L}$\label{alg:MRevision}
		\EndFor
	\end{algorithmic}
\end{algorithm}

A simplified sketch of a single training epoch is depicted in Algorithm~\ref{alg:gnn_training_epoch} for illustration purposes. The computation of revised node representations for {\bf all} nodes in the graph (assume $level_{min}=1$ and $level_{max}=|F^S|$) through message passing is performed in Line~\ref{alg:representation}. Then, we use these representations to predict the MI values for nodes in $\tilde{{\mathbb{V}_i}}$, for which we have computed the ground truth MI values (Line~\ref{alg:predictMI}). The average loss over these predictions is computed in Line~\ref{alg:lossComputation}, followed by an update of model parameters (Line~\ref{alg:MRevision}). 
It is noteworthy that one should not infer any computational complexity based on Algorithm~\ref{alg:gnn_training_epoch}, as all computations occur synchronously and the use of loops is given for clarity of presentation.

\subsection{Inferencing $TopK^m_i$}\label{subsection:topk}

\alg, trained on $\mathbb{G}$, serves as a robust tool for predicting MI scores for feature subsets. However. our analysis extends beyond, offering insights into the intricate relationships among feature subsets within. Leveraging the trained model, we flexibly estimate MI scores for uncomputed nodes within subgroups. Consequently, when presented with a subgroup $\mathbb{D}_i$, combination size $m$, and integer $K$, our objective is to retrieve $TopK^m_i$. This is done by retaining only nodes positioned at the $m$-th level at $\mathbb{G}_i$ and sorting them in a descending order with respect to their predicted MI scores.

\section{Empirical Evaluation}
\label{sec:experiments}
In this section, we test \algspace with real-world and synthetic datasets. We evaluate its efficacy by systematically varying core parameters, and compare it with multiple baselines of different kinds -- imputation-based, neural network-based, and Markov blanket after non-trivial adaptation.

Our experiments reveal four consistent observations that showcase the efficacy of our proposed solution. (1) \algspace outperforms the baselines consistently and exhibits robustness to significantly missing data compared to the baselines.
(2) Our proposed sampling strategy is both effective and efficient in capturing interdependencies among the feature sets and subgroups. (3) Computing MI of all possible feature subsets is a leading computational bottleneck that \alg\ tactically overcomes and scales under different varying parameters. (4) \algspace learns the problem semantics and in particular conserve to a large degree the upward closure property. 



Experimental setup (Section~\ref{subsection:experimental_setup}) is followed by an efficacy study of \alg, considering multiple baselines (Section~\ref{subsection:alg_preformance}). Section~\ref{subsection:sampling_analysis} delineates the effectiveness of our proposed sampling strategy, followed by an analysis of the upward closure property (Section~\ref{subsection:upward_closure}). 
Scalability analysis (Section~\ref{subsection:scalability}) concludes the empirical study. 

\subsection{Experimental Setup}
\label{subsection:experimental_setup}
We next discuss benchmark datasets (Section~\ref{subsection:datasets}), implementation details (Section~\ref{subsection:implementation_details}), baseline methods (Section~\ref{subsection:baselines}), and evaluation metrics (Section~\ref{subsection:evaluation_metrics}).
\subsubsection{Datasets}
\label{subsection:datasets}
We use three real-world and two synthetic datasets. The key metadata summary of the datasets is presented in Table~\ref{tab:datasets}.
\ifNotTechReport
We provide a more detailed description of the datasets and their preprocessing in the technical report.\footnotemark[\value{footnote}]
\fi

\noindent \textbf{Real-world datasets.} We now describe the three publicly available datasets used in our experiments. 

\noindent {\tt Employee Attrition\cite{attrition}} predicts employee attrition (whether the employee stayed at work or left), based on (mostly) categorical features related to the employee's profession history (\emph{e.g.}, number of promotions, company tenure), personal circumstances (\emph{e.g.}, marital status, work-life balance), and job-related aspects (\emph{e.g.}, job satisfaction, job role, company size).
\ifTechReport
Three continuous features, namely \emph{Years at Company, Monthly Income} and \emph{Distance from Home} were discretized with the following range scheme:
$\textit{Years at Company}: \{\leq3, 4-6, 7-10, 11-20, 20<\}$, $\textit{Monthly Income (USD)}: \{\leq3,000, 3,001-5,000, 5,001-8,000, 8,001-10,000, 10,000<\}$, $\textit{Distance from Home (miles)}: \{\leq3, 4-6, 7-10, 11-20, 20<\}$.
\fi
We use age and gender as criteria for subgroup split, forming eight of them.
\ifTechReport
The age range splits are: $\{\leq25,~25-40 ~40-50,~50+\}$.
\fi

\noindent {\tt Mobile\cite{mobile}} classifies price ranges of different mobile phones using features like battery power, blue tooth capability, memory size, depth, weight and screen size. Subgroups are created by separating mobile phones with single and dual sims.

\noindent {\tt Loan}\cite{loan} classifies whether an individual will default on a loan payment using features like loan amount, interest rate, employment duration, home ownership, and payment plan. Subgroups are created based on loan grades, a measure of customer's financial credibility.

We discretize continuous variables through binning, a common practice in the MI literature~\cite{Battiti1994, peng2005feature}. 
For simplicity, we also discretize categorical features having more than 9 distinct values. For systematic missing data, we randomly select a subset of features within each subgroup, using missingness probability $p$. This method ensures our ability to evaluate the performance of \algspace against a valid ground truth. We ensure that every feature is present in at least one subgroup, and each subgroup contains at least one missing feature.


\ifNotTechReport
    \begin{table}
    	\centering
    	\caption{Metadata of the processed datasets}
    	\scalebox{1.18}{\begin{tabular}{|l|c|c|c|}
    			\hline
    			\textbf{Datasets} & \# records & $|F^S|$ & \# subgroups \\
                    \hline
                    Attrition & 59,598 & 19 & 8  \\
    			\hline
    			Mobile & 2,000 & 15 & 2  \\
    			\hline
    			Loan & 67,463 & 15 & 3  \\
    			\hline
    			$SD1$ & 50,000 & 15 & 4  \\
    			\hline
    			$SD2$ & 50,000 & 20 & 4  \\
    			\hline
    	\end{tabular}}
    	\label{tab:datasets}
    \end{table}
\fi

\ifTechReport
    \begin{table}
    	\centering
    	\caption{Metadata of the processed datasets}
    	\scalebox{1.5}{\begin{tabular}{|l|c|c|c|}
    			\hline
    			\textbf{Datasets} & \# records & $|F^S|$ & \# subgroups \\
                    \hline
                    Attrition & 59,598 & 19 & 8  \\
    			\hline
    			Mobile & 2,000 & 15 & 2  \\
    			\hline
    			Loan & 67,463 & 15 & 3  \\
    			\hline
    			$SD1$ & 50,000 & 15 & 4  \\
    			\hline
    			$SD2$ & 50,000 & 20 & 4  \\
    			\hline
    	\end{tabular}}
    	\label{tab:datasets}
    \end{table}
\fi

\noindent\textbf{Synthetic datasets.} we use synthetic datasets to control specific characteristics relevant to the studied problem. 
\ifNotTechReport
Following known practices for synthetic data generation for feature selection~\cite{bolon2013review, kamalov2023synthetic}, we employ logical formulae to form two collections of hyperparameters ($SD1$ and $SD2$), differing in number of features, tuples in the dataset, random noise injection, {\em etc}. (see technical report\footnotemark[\value{footnote}] for details). 
\fi
\ifTechReport
we use synthetic datasets to control specific characteristics of the data relevant to the studied problem. We predefine relevant features, through the use of digital logic to determine target variable values based on existing features. This logic, defined through Boolean algebra, introduces both linear and nonlinear relationships among feature subsets and target variables, and among different feature subsets. To accommodate complex scenarios beyond binary values, we perform these logical operations bitwise. For example, to allow four possible feature values we use two binary bits such that, for example, $01 \ XOR \ 11= 10$ and $01 \ AND \ 11= 01$.

Following~\cite{bolon2013review, kamalov2023synthetic}, we define four types of features, as follows. \emph{Relevant} features are those incorporated into the logical formula, directly influencing the definition of the target variable. \emph{Correlated} features are generated by randomly modifying the target variable's value at a predefined rate. \emph{Redundant} features are created as logical functions of relevant features. Lastly, \emph{irrelevant} features are randomly generated and hold no indicative significance with the target variable. The complexity level and distinctiveness between feature subsets in terms of MI are influenced by both the logical formula and the distribution of the different feature types. 

Subgroups are created by randomly partitioning the dataset into vertical fragments. Within each subgroup, feature values are randomly generated from a uniform distribution. Additionally, each subgroup uses a distinct random noise injection drawn from a normal distribution, involving value flips across its features. To simulate datasets with systematically missing data, we randomly select a set of features within each subgroup to make each of them empty with probability $p$. We employ three logical formulas to form two collections of hyperparameters ($SD1$ and $SD2$), differing in number of features, tuples in the dataset, random noise injection, {\em etc}. A comprehensive description of the hyperparameter settings and logical formulas is provided in our publicly available repository.\footnotemark[\value{footnote}] 
\fi

\begin{sloppypar}
\subsubsection{Implementation Details}
\label{subsection:implementation_details}
The experiments were executed on a Linux machine with NVIDIA A100 GPU. 
The GNN was implemented with PyTorch Geometric~\cite{fey2019fast}. Our entire code is publicly available in a GitHub repository.$^1$
\end{sloppypar}
We employ a heterogeneous variant of GraphSage~\cite{hamilton2017inductive} with two layers and a hidden representation dimensionality of $128$. Training lasts for $1000$ epochs, utilizing the \emph{Adam} optimizer~\cite{kingma2014adam} with a learning rate of $0.001$ and weight decay of $5e-4$. Model parameters are selected based on their performance on a validation set, which consists of 20\% of the training data randomly sampled. 

\subsubsection{Baselines}
\label{subsection:baselines} We use four baselines, as follows.


\noindent  (1) {\tt KNN}~\cite{Meesad2008CombinationOK}. An imputation based baseline, considering the entire feature space of every record and using hamming distance~\cite{han2012data} to compute the mode of the feature value of $k$-nearest neighbors ($k$ is an input parameter of the algorithm) whenever imputation is needed. Imputed values are used with the closed form of MI to quantify feature importance.

\noindent (2) {\tt Markov Blanket} A non-trivial adaptation of~\cite{impute2}, integrating missing data imputation inside Markov Blanket (MB) Learning. We first fill missing values using K-Nearest Neighbors (KNN) imputation. Then, using~\cite{impute2} we find the Markov Blanket (MB) of the target variable, which is a set of the most relevant features. The aforementioned two steps repeat until the MB no longer changes. Finally, the top features selected by MB are combined  to create sets of size $m$ by maximizing joint MI, from where the best $K$ results that have the  highest joint MI are retained.

\noindent (3) {\tt MLP} employs a fully-connected neural network using the binary representation of feature combinations (Section~\ref{subsection:single_lattice}) as inputs. The network uses two fully-connected layers with hidden dimension of 64 each. The model is trained against each subgroup separately, with the same parameter selection approach as \algspace (Section~\ref{subsection:implementation_details}).


\noindent (4) {\tt Arbitrary} performs a uniform random selection of samples from ${\mathcal F}$, to be compared against \algsamplenospace.



Evaluation is conducted with varying probability $p$ ($0.2$ and $0.5$) of features with systematic missing data, sampling rate $B \in \left\lbrace 0.25, 0.5, 0.75, 1.0\right\rbrace $ per subgroup, $m=3$ (number of designated features in a set), and $K \in \left\lbrace 5,10\right\rbrace$. 


\subsubsection{Evaluation Measures}
\label{subsection:evaluation_metrics}
\begin{sloppypar}
We use two rank-based measures to evaluate the effectiveness of the proposed solutions, namely nDCG@$K$ and precision@$K$. Sampling effectiveness is measured using  $\ell_1$ norm of total variation distance~\cite{chung1989measures}, as defined in Eq.~\ref{def:L1_total_var_dist}. We test on each subgroup separately and compute the average across all subgroups. The reported results are computed over three different seeds. The test set for each subgroup consists of the feature subsets that include at least one missing feature, alongside feature subsets that have not been sampled (Section~\ref{subsection:sampling_analysis}).
\end{sloppypar}

Given our special MI-based ranking system, we adjust the measures to our settings, as follows.

\begin{definition}[MI-based Normalized Discounted Cumulative Gain]
	\label{def:ndcg}
	Given a set of feature combinations of $m$-size:
\begin{equation}
	\begin{split}
		\textnormal{nDCG}@K &= \frac{\sum_{i=1}^K \frac{\mathds{1}_{\mathrm{rank}_{\mathrm{pred}}[i] \in \mathrm{TopK}^m}}{\log_2(i+1)}}{\sum_{i=1}^K \frac{1}{\log_2(i+1)}}
	\end{split}
\end{equation}

\noindent where $rank_{pred}$ is a descending order of predicted ranks and $\mathds{1}_{rank_{pred}[i] \ \in \ TopK^m}$ is an indicator, assigned $1$ if the $i$-th predicted rank is in the ground truth top-$k$ and $0$ otherwise.
\end{definition}

\begin{definition}[MI-based Precision]
	\label{def:precision}
	\begin{equation}
	\begin{split}
		\textnormal{precision}@K = \frac{rank_{pred}[:K] \cap TopK^m}{K}
	\end{split}
	\end{equation}    
\end{definition}

\begin{definition}[$\ell_1$ norm of total variation distance between two distributions]
	\label{def:L1_total_var_dist}
	Given two probability distribution function $S$ and $P$ defined on event space $\calE$, $\ell_1$ norm of total variation distance between $S$ and $P$ is defined as follows.
\begin{equation}
	\delta_{\ell_1}(S, P) = \sum_{e \in \calE} \lvert S(e) - P(e) \rvert
\end{equation}
\end{definition}




\subsection{\alg ~vs. Baselines}
\label{subsection:alg_preformance}
\noindent
Comparisons are performed using all datasets with no sampling strategy for \alg.
In this work, we consider a setting and problem definition that, to the best of our knowledge, are not present in the literature. Hence, to facilitate comparisons, we modified existing methods to serve as baselines.
For all datasets, $\alg$ outperforms the baselines according to both effectiveness measures, with multiple $K$ values.
\begin{sloppypar}
\smallskip \noindent {\bf Effectiveness.} Table~\ref{tab:res_table} shows the test results on the real-world and synthetic datasets, with a probability $p=0.2$ for a feature having systematic missing data. For the real-world datasets $\alg$ consistently outperforms all baselines in both nDCG@$K$ and precision@$K$.
As for the synthetic datasets,  which are rendered more difficult, $\alg$ showcases superior performance over the baseline, with few exceptions. 
{\tt MLP} exhibits the worst performance among the baselines. As the input for {\tt MLP} is initialized exactly as $\alg$, its inferior results indicate that without capturing the underlying dependencies between features, addressing the missing data challenge as a learning task is unfeasible. {\tt Markov Blanket} is better than {\tt MLP} but worse than {\tt KNN}. From these experimental analyses a clear message prevails - our proposed solution \alg\ outperforms all the baselines. Among the baseline solutions, {\tt KNN} turns out to be the most effective one. The rest of the comparison in the experiment section therefore is conducted between {\tt KNN} and \alg.
\end{sloppypar}

\begin{table}
	\caption{\small Effectiveness comparison of \alg\ and the baselines, demonstrating that \alg\ exhibits higher nDCG and precision compared to the baselines for almost all $K$.} 
	\centering
	\scalebox{1.05}{\begin{tabular}{c|l|cccc}
			\hline
			\multirow{2}{*}{\textbf{Dataset}} & \multirow{2}{*}{\textbf{Metrics}} & \multicolumn{4}{c}{\textbf{Algorithm}} \\
			& & \alg & {\tt KNN} & {\tt Markov} & {\tt MLP}  \\
			\hline

                \multirow{4}{*}{Attrition} 
			& nDCG@5 & \underline{0.39} & 0.29 & 0.34 & 0.04 \\
			& nDCG@10 & \underline{0.53} & 0.34 & 0.39 & 0.07 \\
			& precision@5 & \underline{0.36} & 0.25 & 0.33 & 0.04 \\
			& precision@10 & \underline{0.49} & 0.29 & 0.37 & 0.08 \\
			\hline
            
			\multirow{4}{*}{Mobile} 
			& nDCG@5 & \underline{0.64} & 0.53 & 0.05 & 0.06 \\
			& nDCG@10 & \underline{0.74} & 0.60 & 0.07 & 0.19 \\
			& precision@5 & \underline{0.64} & 0.47 & 0.04 & 0.07  \\
			& precision@10 & \underline{0.67} & 0.53 & 0.07 & 0.20  \\
			\hline
			\multirow{4}{*}{Loan} 
			& nDCG@5 & \underline{0.56} & 0.45 & 0.23 & 0.02 \\
			& nDCG@10 & \underline{0.64} & 0.54 & 0.34 & 0.02  \\
			& precision@5 & \underline{0.51} & 0.42 & 0.18 & 0.02  \\
			& precision@10 & \underline{0.59} & 0.52 & 0.30 & 0.02 \\
			\hline
			\multirow{4}{*}{SD1} 
			& nDCG@5 & \underline{0.58} & 0.45 & 0.47 & 0.02  \\
			& nDCG@10 & \underline{0.68} & 0.54 & 0.50 & 0.05 \\
			& precision@5 & \underline{0.52} & 0.39 & 0.40 & 0.02  \\
			& precision@10 & \underline{0.61} & 0.47 & 0.42 & 0.06  \\
			\hline
			\multirow{4}{*}{SD2} 
			& nDCG@5 & 0.52 & \underline{0.58} & 0.54  & 0.03 \\
			& nDCG@10 & \underline{0.64} & 0.61 & 0.59 & 0.06  \\
			& precision@5 & 0.47 & \underline{0.56} & 0.47 & 0.03 \\
			& precision@10 & \underline{0.57} & 0.54 & 0.55 & 0.06 \\
			\hline
	\end{tabular}}
	\label{tab:res_table}
\end{table}

\ifNotTechReport
\begin{figure}[!htbp]
	\includegraphics[scale=0.45]{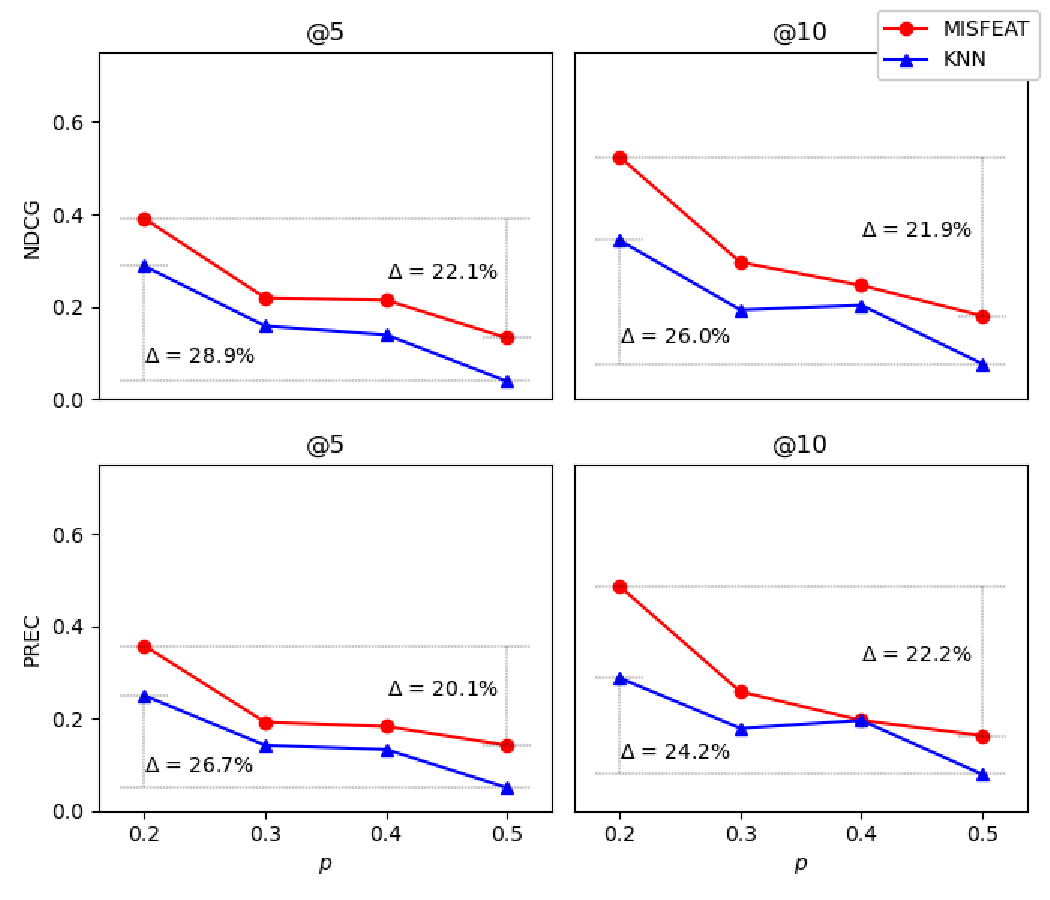}
	\vspace{-11pt}
	\caption{\small({\tt Attrition dataset}) nDCG and Precision with increasing $p$. \alg\ is consistently more effective with smaller $\Delta$ values. }
	\label{fig:attrition_Delta}
\end{figure}
\fi
\ifTechReport
\begin{figure}[!htbp]
	\includegraphics[scale=0.5]{figs/attrition_Delta.eps}
	\vspace{-11pt}
	\caption{\small({\tt Attrition dataset}) nDCG and Precision with increasing $p$. \alg\ is consistently more effective with smaller $\Delta$ values. }
	\label{fig:attrition_Delta}
\end{figure}
\fi
\ifTechReport
\begin{figure}[!htbp]
	\includegraphics[scale=0.5]{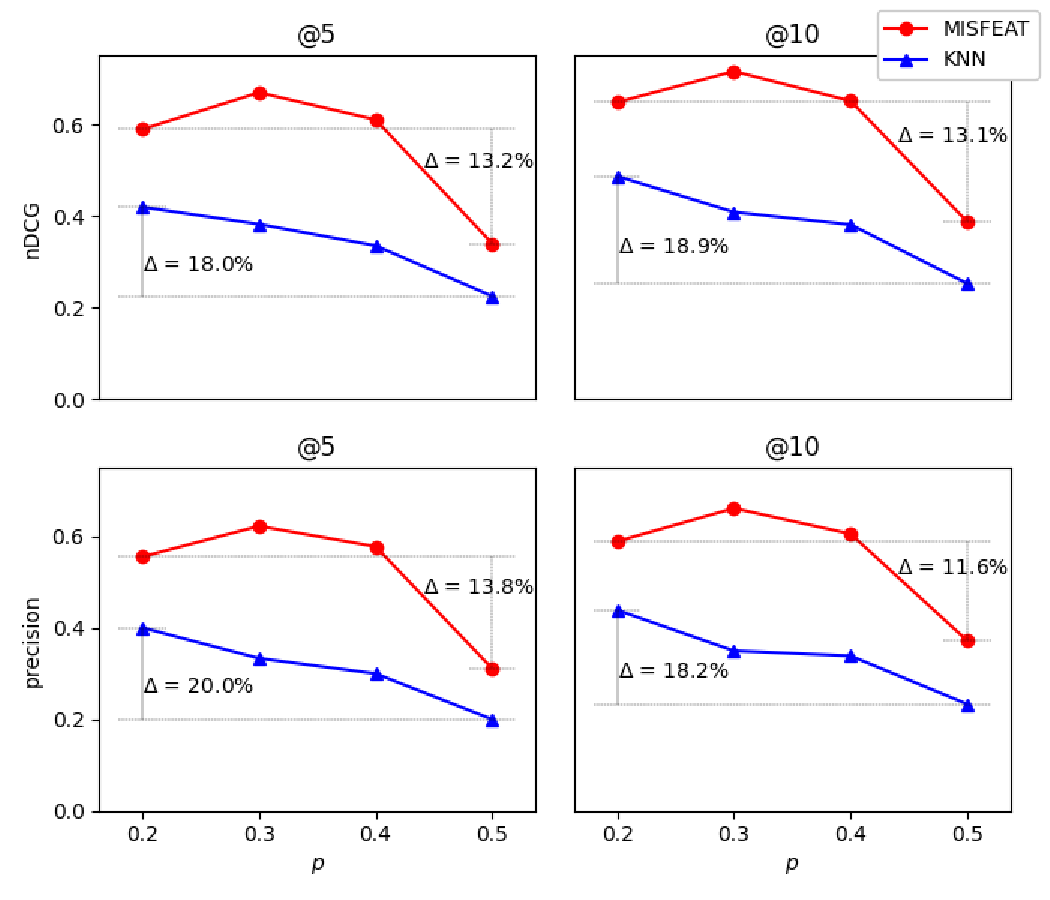}
	\vspace{-11pt}
	\caption{\small({\tt Mobile dataset}) nDCG and Precision with increasing $p$. \alg\ is consistently more effective with smaller $\Delta$ values. }
	\label{fig:mobile_Delta}
\end{figure}
\begin{figure}[!htbp]
	\includegraphics[scale=0.5]{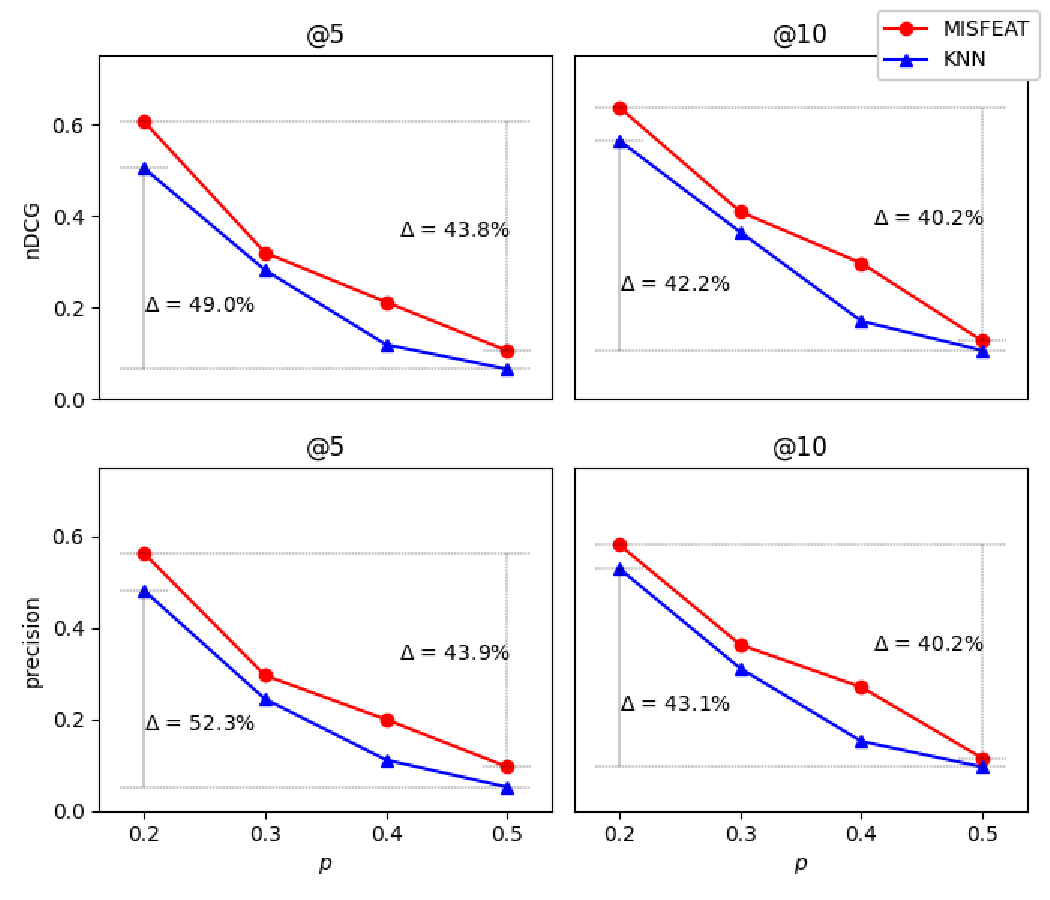}
	\vspace{-11pt}
	\caption{\small ({\tt Loan dataset}) nDCG and Precision with increasing $p$. For both algorithms, $\Delta$ is higher (although \alg\ performs better) due to low correlation of features across subgroups.}
	\label{fig:loan_Delta}
\end{figure}
\fi

\begin{sloppypar}
\smallskip \noindent {\bf Evaluating Robustness.} 
In this experiment, we compare \alg\ and {\tt KNN} with increasing $p$ (likelihood of a feature to have systematically missing data). As expected, both $\alg$ and the baselines including {\tt KNN} perform worse as $p$ increases. We display effectiveness on the real-world datasets and the average relative drop $\Delta$ for 10\% increase in $p$. For nDCG the average drop is calculated as: $\Delta = 1/3 \times \{\frac{nDCG@p=0.2 - nDCG@p=0.3}{nDCG@p=0.2}+\frac{nDCG@p=0.3 - nDCG@p=0.4}{nDCG@p=0.3}+\frac{nDCG@p=0.4 - nDCG@p=0.5}{nDCG@p=0.4}\}$. The average drop in precision is calculated analogously.
\ifNotTechReport
Due to space constraints, we present here only the results obtained for the {\tt Attrition} dataset, while the results over the other two real-world datasets demonstrate similar trends and presented in the technical report. 
\fi
As can be seen in Figure~\ref{fig:attrition_Delta}, \alg\ is more robust to high values of $p$, reflected by its lower $\Delta$ in both measures. This highlights the effectiveness of \alg\ in mitigating the impact of systematically missing data, utilizing its message passing mechanism across subgroups to capture feature set dependencies and reuce the adverse effect of missingness.
\ifTechReport
Figures~\ref{fig:mobile_Delta} and \ref{fig:loan_Delta}, demonstrate similar trends, showing that \alg\ is more robust to high values of $p$. 
The drop is higher for both algorithms in the {\tt Loan} dataset, although \alg\ always outperforms {\tt KNN}. The difference can be attributed to lower correlation across subgroups for this dataset. 
\fi
\end{sloppypar}

\subsection{Sampling Analysis}
\label{subsection:sampling_analysis}
The goal of \algsample is to produce a uniform random sample from the distribution of all possible feature sets (population). We evaluate the effectiveness of \algsample vs {\tt Arbitrary} on this regard by measuring the $\ell_1$ norm of total variation distance of MI between \algsample and population and that of {\tt Arbitrary}  and population.

\begin{figure}[htbp]
    \includegraphics[scale=0.448]{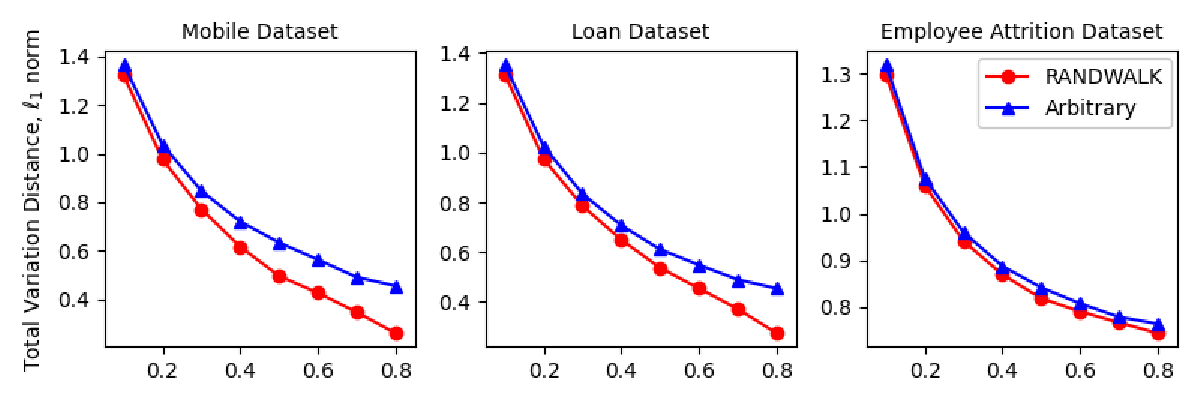}
    \caption{\small $\ell_1$ norm of total variation distance of MI between $\algsample$ and population distribution vs. that of {\tt Arbitrary} and population distribution varying sampling budget $B$. \algsample\ consistently shows lower variation distance.} 
    \label{fig:mobile_R_vs_A}
\end{figure}
Figure~\ref{fig:mobile_R_vs_A} illustrates a comparative analysis of $\algsample$ and {\tt Arbitrary} in terms of $\ell_1$ norm of total variation distance on {\tt Mobile}, {\tt Loan}, and {\tt Employee Attrition} datasets. Per Definition~\ref{def:L1_total_var_dist}, $S$ represents the distribution of MI for sampled nodes, while $P$ represents the distribution of MI for the entire lattice. Evidently, $\algsample$ outperforms {\tt Arbitrary} in all datasets with lower $\delta_{\ell_1}$ across different budgets $B$, hence reliably represents the characteristics of the distribution of MI across all nodes in the lattice.

\subsection{The Upward Closure Property with \algspace}

\label{subsection:upward_closure}
The upward closure property states that the MI of a smaller feature subset is never larger than that of any of its supersets (Section~\ref{subsection:FeatureSelection}). In the context of our work, this property is manifested by the lattice structure, where inter-level edges (Section~\ref{subsection:single_lattice}) indicate the inclusion relationship between subsets of subsequent levels in the lattice. When all MI scores are available, every node has a higher MI score than its neighbors from the level below reflecting the upward closure property. \algspace utilizes the graph structure and learns its topology via the message passing mechanism.

\ifNotTechReport
\begin{figure}[htpb]
\center
	\includegraphics[scale=0.35]{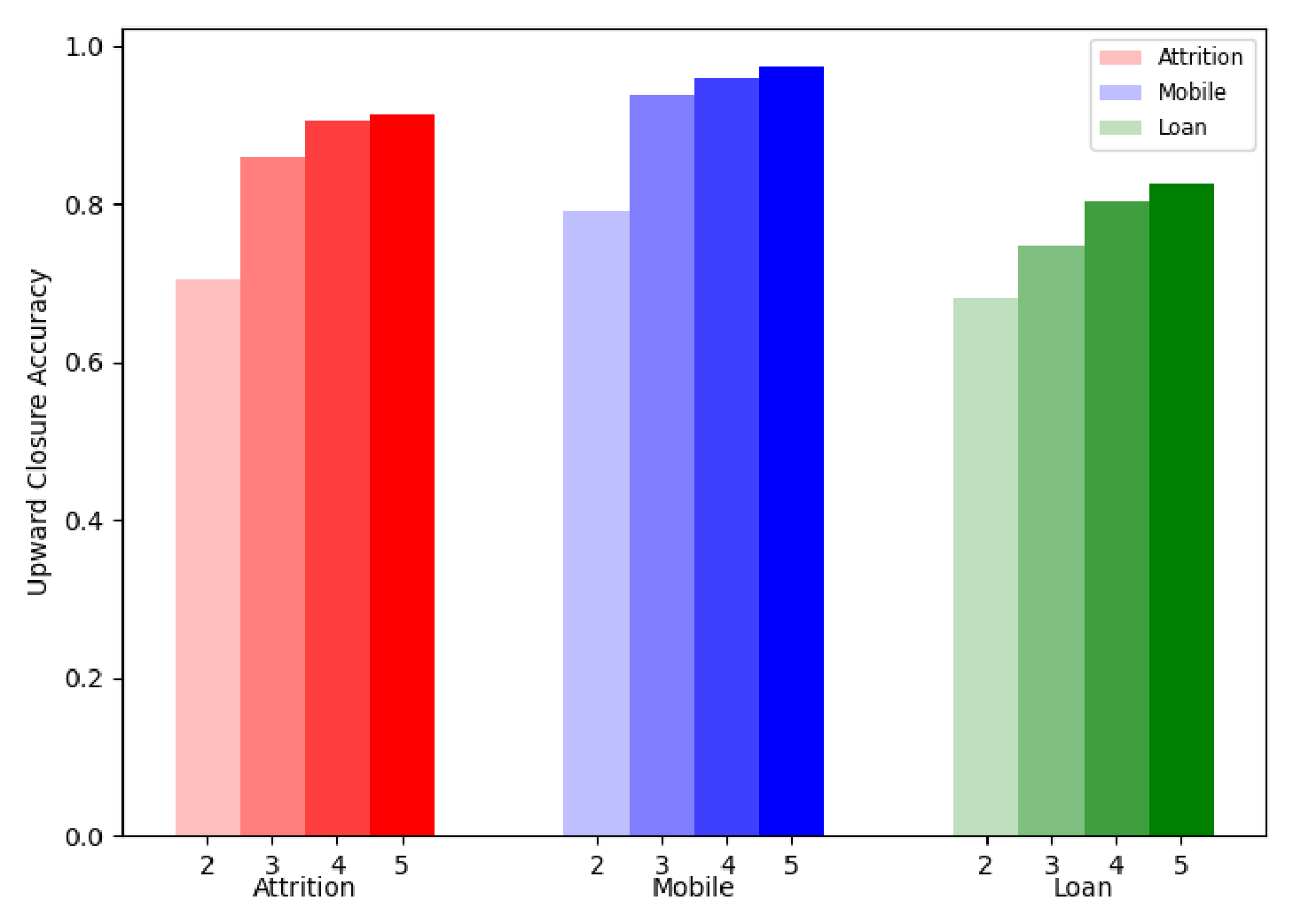}
	\caption{Upward closure accuracy scores for the real-world data sets over increasing number of levels}
	\label{fig:upward_closure_accuracy_analysis}
\end{figure}
\fi
\ifTechReport
\begin{figure}[htpb]
\center
	\includegraphics[scale=0.51]{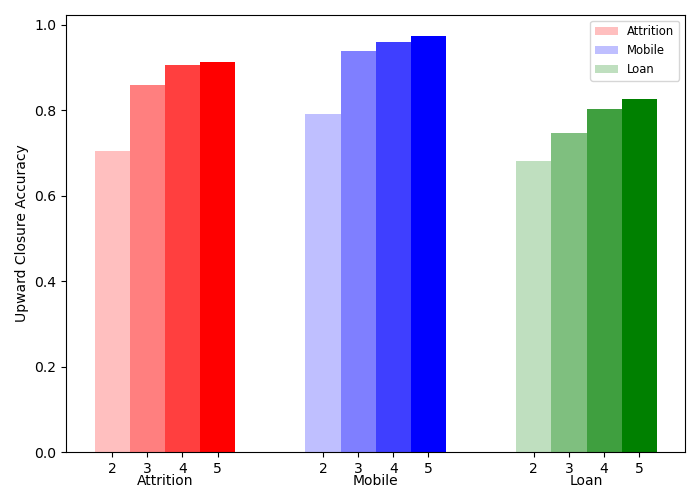}
	\caption{Upward closure accuracy scores for the real-world data sets over increasing number of levels}
	\label{fig:upward_closure_accuracy_analysis}
\end{figure}
\fi

We compute the upward closure prediction accuracy as a ratio of the number of times a node was predicted with a higher MI than its neighbor from a lower level to the total number of edges connecting between subsequent levels in the lattice.
The empirical results for \algspace are shown in Figure~\ref{fig:upward_closure_accuracy_analysis}, where levels are represented by distinct bar, and a group of four adjacent bars refers to the same dataset.
As can be seen, \algspace gradually improves its performance and feature subsets become more inclusive as the level number increases. This highlights the model's ability to leverage the lattice's structural properties and the relationships between feature subsets.

\subsection{Scalability of \alg}
\label{subsection:scalability}
To identify computational bottlenecks and study the efficiency of $\alg$, we compare it against {\tt KNN}, an imputation-based baseline. 
To ensure a fair comparison between the two approaches, we compare the time required by {\tt KNN} to impute missing data and compute MI over ${\mathcal F}^-$ (feature subsets with missing features) against the training and inference time of \alg. Note that \algspace eliminates the needs for MI computation of ${\calF}^-$ during inference by leveraging the trained GNN model.
\ifNotTechReport
\begin{figure}[htbp]
	\centering
	\includegraphics[scale=0.48]{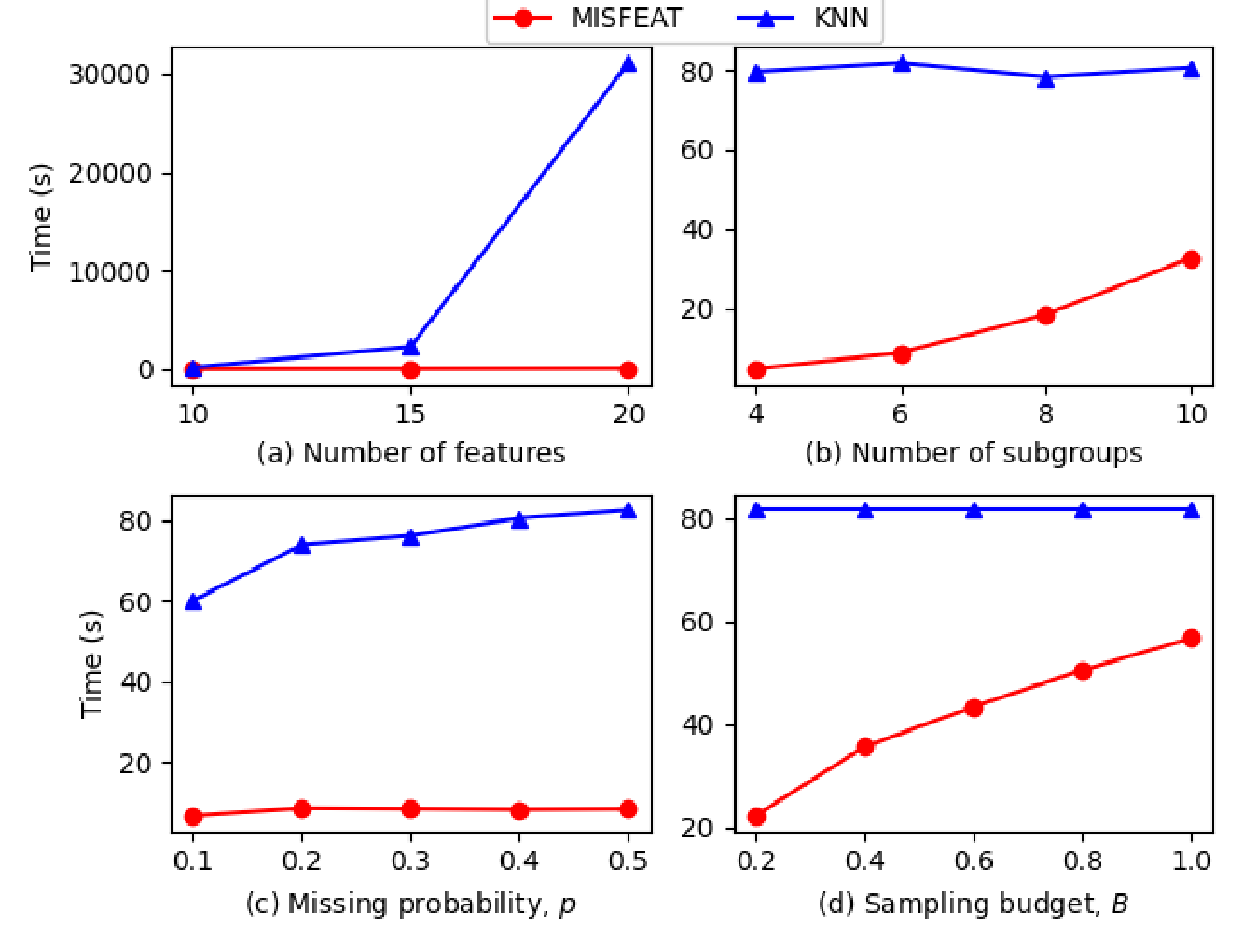}
	\caption{Execution time: \algspace  vs. {\tt KNN} by varying different parameters, demonstrating that \algspace scales well.}
	\label{fig:scalability} 
\end{figure}
\fi
\ifTechReport
\begin{figure}[htbp]
	\centering
	\includegraphics[scale=0.55]{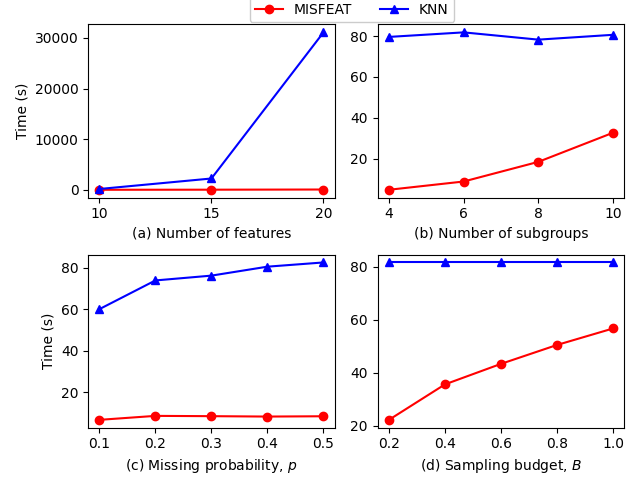}
	\caption{Execution time: \algspace  vs. {\tt KNN} by varying different parameters, demonstrating that \algspace scales well.}
	\label{fig:scalability} 
\end{figure}
\fi

We evaluate scalability of the algorithms using a synthetic dataset of $50,000$ records. In each experiment, we vary the value of a single parameter while keeping the others constant ($60\%$ sampling strategy, missing probability $p=0.2$, $|F^S|=10$, and $4$ subgroups). Figure~\ref{fig:scalability} presents the execution time comparison of $\alg$ and {\tt KNN} by varying each of the four parameters. 
Figure~\ref{fig:scalability}a focuses on varying the number of features. Since {\tt KNN} inference requires computing MI after imputation, it is susceptible to the exponential growth of feature subsets when the number of features increases. $\alg$, on the other hand, benefits from fast training and inference using GNN, resulting in a significant speedup compared to {\tt KNN}. 
In Figure~\ref{fig:scalability}b we vary the number of subgroups. Since {\tt KNN} does not differentiate between subgroups, its execution time remains stable as their number increases. \alg, on the other hand, demonstrates its scalability and effectively handles a large number of subgroups. 
Figure~\ref{fig:scalability}c compares varying values of missing probability $p$. As $p$ increases, the size of $\calF^-$ grows, requiring {\tt KNN} to spend more time on imputating MI values for $\calF^-$. In contrast, \algspace benefit from its inferencing mechanism and increasing $p$ does not add to its overhead.
Finally, Figure~\ref{fig:scalability}d  shows the results across increasing sampling budget, $B$. With a lower budget, \algspace requires less time for training whereas the execution time of {\tt KNN} is unaffected by the budget, as expected. For \alg, reducing $B$ from $1.0$ to $0.2$ results in 3$\times$ speedup on execution time.

\ifNotTechReport
\begin{figure}[htbp]
	\centering
	\includegraphics[scale=0.52]{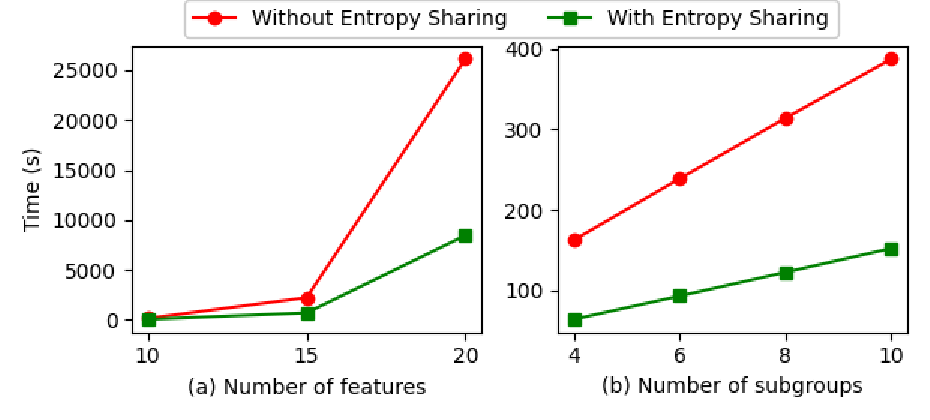}
	\caption{MI pre-computation with and without entropy sharing.}
	\label{fig:pre-compute} 
\end{figure}
\fi
\ifTechReport
\begin{figure}[htbp]
	\centering
	\includegraphics[scale=0.57]{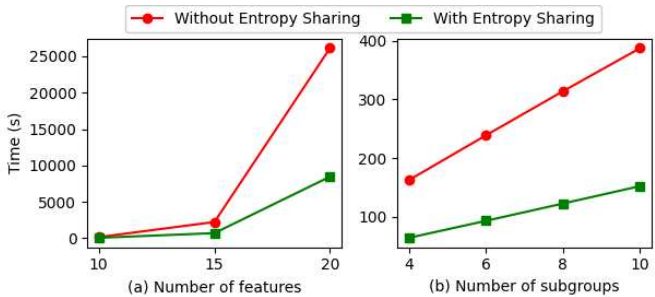}
	\caption{MI pre-computation with and without entropy sharing.}
	\label{fig:pre-compute} 
\end{figure}
\fi

To speed up the process of pre-computing MI, we use entropy sharing when computing MI values over the lattice, using $\Theta(2^{|\calF|})$ storage space (see~\ref{sec:MI-sharing}). Figure~\ref{fig:pre-compute} illustrate the effectiveness of this approach by varying the number of features (Figure~\ref{fig:pre-compute}a) and the number of subgroups (Figure~\ref{fig:pre-compute}b). Entropy sharing reduces significantly the MI computation time  with increasing number of features and subgroups. 

We can conclude that exhaustive enumeration of MI is computationally infeasible and $\alg$ scales robustly across varying numbers of subgroups, missing probabilities, and sampling budgets.



\section{Related Work}
\label{sec:related_work}
There are three common methods for feature selection. Filtering uses statistical measures to rank and select features~\cite{Koller1996TowardOF,Hancer2018}.  Wrapper methods utilize predictive performance of a specific learning algorithm~\cite{Mafarja2018WhaleOA,JOHN1994121}. Finally, embedded methods, such as LASSO, integrate feature selection directly into the model training process to enhance model generalization~\cite{Robert1996,Hui2005,Yuan2006}. 
No related work, to the best of our knowledge, studies feature selection for systematic missingness considering different subgroups. We next analyze the related work on the use of MI for feature selection, feature selection with missing data, and the use of deep learning for the task.

\noindent\textbf{Feature selection \& MI.}   
Existing feature selection algorithms are typically categorizes into four main groups~\cite{li2017feature}: similarity-based, information-theoretical-based, sparse-learning-based, and statistical-based methods. MI is a model agnostic filtering-based information-theoretic approach that is widely used in feature selection~\cite{battiti1994using,brown2012,Vergara2014,salam2019human}. It quantifies the dependency between variables, thereby assisting in selecting the most informative features for predicting the target variable~\cite{battiti1994using,brown2012,Vergara2014}. 
\cite{Hanchuan2005}~\cite{Hanchuan2005} introduce a comprehensive framework that utilizes MI to achieve optimal feature selection by maximizing dependency on the target variable, enhancing relevance of the features, and minimizing redundancy among them. {\em We borrow inspiration from these prior works and consider MI for selecting important feature sets.} 
\begin{sloppypar}
\noindent\textbf{Feature selection \& missing data.}
Meesad and Hengpraprohm~\cite{Meesad2008CombinationOK} 
propose the use of k-NN based missing value imputation to improve feature selection. 

Yu et al.~\cite{impute2} introduce a novel framework for causal feature selection with missing data, integrating multiple imputation and Markov blanket learning to enhance both data imputation and causal discovery in Bayesian networks. The proposed graphical model integrates multiple imputations ({\em not typically suitable for systematic missing data}) and Markov blanket learning to enhance both data imputation and causal discovery of features. This work is different from ours in two main aspects. First, {\em we use prediction rather than imputation}. Second, {\em no support for subgroups or systematic computation of top-$K$ feature subsets, each with $m$ features, is provided}. We non-trivially adapt this solution to serve as a baseline. 

Xue et al.~\cite{Xue2022} have developed a multi-objective approach to feature selection that effectively handles missing data in classification tasks, optimizing for both feature relevance and robustness of the selection process. Zhu et al.~\cite{ZHU2018488} have developed a method for multi-label feature selection that effectively addresses the challenge of missing labels, ensuring the robustness and accuracy of feature selection in complex multi-label environments. Prior work has studied a method for feature selection with missing data using MI estimators that directly estimate MI from incomplete datasets, bypassing the need for imputation and preserving the integrity of the original data distribution~\cite{mi1}. Qian and Shu~\cite{qian2015mutual} studied MI-based feature selection method for incomplete data that combines tolerance information granules and a forward greedy strategy for efficiency purposes. {\em Other than the k-NN based approach (implemented as a baseline with inferior performance), none of these techniques handle systematic missing data to produce top-$K$ feature sets with a predefined number of features.}
\end{sloppypar}   

\noindent\textbf{Feature selection \& Deep learning.} Gradient-based methods, such as DeepLIFT, deduce feature significance through changes in gradients observed during back-propagation across network layers~\cite{Shrikumar2017}. Lu et al.~\cite{Lu2018} present DeepPINK, a methodology that employs filter technique to enhance the reproducibility of feature selection in deep neural networks, focusing on reliable identification of significant features across different datasets. Furthermore, methods like Integrated Gradients offer alternative techniques by attributing the prediction of a neural network to its inputs, thereby providing a deeper understanding of feature importance, which complements these methods in complex model architectures~\cite{Sundararajan2017}. 
{\em These works are not filtering-based and cannot produce top-$K$ feature sets based on MI.} Belghazi et al~\cite{belghazi2018mutual} introduce MINE, estimating MI using a neural network-based discriminator. While their work also tackles computational challenges using sampling {\em it is not aimed to deal with missing data.}




\ignore{
Write a paragraph about each of the following:
\begin{itemize}
	\item Feature selection, including feature selection with missing data (Thin/Mouinul)
	\item MI for feature selection, including MI with missing data (Thin/Mouinul)
	\item Machine learning for feature selection (Thin/Mouinul)
	\item GNNs (Bar)
\end{itemize}}
\section{Conclusions}
\label{sec:conclusion}
 We introduced \alg, a GNN-based framework 
 for feature selection considering MI with systematic missing data for datasets with distinct subgroups. The proposed model is generalizable. It can handle both systematic and random missing data and can be extended to handle multiple model agnostic feature selection measures. The proposed model is based on lattice organization of feature subsets, benefiting from the MI upward closure property, which it learns well, and targeted sampling of MI computation of a limited number of feature subsets, using multiple efficiency opportunities in training the model. Through a thorough empirical analysis, of both real-world and synthetic datasets, we demonstrate the effectiveness of \alg and its different components. In particular, we demonstrate that the efficiency opportunities attain significant speedup. We also show the designed solution accurately predicts missing MI values, even under severe sampling budget limitations. Also, the top-$K$ feature subsets correlate well with the true ranking of feature subsets, offering a useful decision-making mechanism for applications in domains such as medical informatics, where data gathering may be costly or otherwise restricted by regulatory bodies.

\ifNotTechReport
\newpage
\fi


\bibliographystyle{IEEEtran}
\bibliography{main.bib}

\begin{thebibliography}{10}
\providecommand{\url}[1]{#1}
\csname url@samestyle\endcsname
\providecommand{\newblock}{\relax}
\providecommand{\bibinfo}[2]{#2}
\providecommand{\BIBentrySTDinterwordspacing}{\spaceskip=0pt\relax}
\providecommand{\BIBentryALTinterwordstretchfactor}{4}
\providecommand{\BIBentryALTinterwordspacing}{\spaceskip=\fontdimen2\font plus
\BIBentryALTinterwordstretchfactor\fontdimen3\font minus \fontdimen4\font\relax}
\providecommand{\BIBforeignlanguage}[2]{{%
\expandafter\ifx\csname l@#1\endcsname\relax
\typeout{** WARNING: IEEEtran.bst: No hyphenation pattern has been}%
\typeout{** loaded for the language `#1'. Using the pattern for}%
\typeout{** the default language instead.}%
\else
\language=\csname l@#1\endcsname
\fi
#2}}
\providecommand{\BIBdecl}{\relax}
\BIBdecl

\bibitem{li2017feature}
J.~Li, K.~Cheng, S.~Wang, F.~Morstatter, R.~P. Trevino, J.~Tang, and H.~Liu, ``Feature selection: A data perspective,'' \emph{ACM computing surveys (CSUR)}, vol.~50, no.~6, pp. 1--45, 2017.

\bibitem{chandrashekar2014survey}
G.~Chandrashekar and F.~Sahin, ``A survey on feature selection methods,'' \emph{Computers \& Electrical Engineering}, vol.~40, no.~1, pp. 16--28, 2014.

\bibitem{dash1997feature}
M.~Dash and H.~Liu, ``Feature selection for classification,'' \emph{Intelligent data analysis}, vol.~1, no. 1-4, pp. 131--156, 1997.

\bibitem{chen2021efficient}
X.~Chen and S.~Wang, ``Efficient approximate algorithms for empirical entropy and mutual information,'' in \emph{Proceedings of the 2021 ACM SIGMOD international conference on Management of data}, 2021, pp. 274--286.

\bibitem{vergara2014review}
J.~R. Vergara and P.~A. Est{\'e}vez, ``A review of feature selection methods based on mutual information,'' \emph{Neural computing and applications}, vol.~24, pp. 175--186, 2014.

\bibitem{pascoal2017theoretical}
C.~Pascoal, M.~R. Oliveira, A.~Pacheco, and R.~Valadas, ``Theoretical evaluation of feature selection methods based on mutual information,'' \emph{Neurocomputing}, vol. 226, pp. 168--181, 2017.

\bibitem{huang2007hybrid}
\BIBentryALTinterwordspacing
J.~Huang, Y.~Cai, and X.~Xu, ``A hybrid genetic algorithm for feature selection wrapper based on mutual information,'' \emph{Pattern Recognition Letters}, vol.~28, no.~13, pp. 1825--1844, 2007. [Online]. Available: \url{https://www.sciencedirect.com/science/article/pii/S0167865507001754}
\BIBentrySTDinterwordspacing

\bibitem{battiti1994using}
R.~Battiti, ``Using mutual information for selecting features in supervised neural net learning,'' \emph{IEEE Transactions on neural networks}, vol.~5, no.~4, pp. 537--550, 1994.

\bibitem{fleuret2004fast}
F.~Fleuret, ``Fast binary feature selection with conditional mutual information.'' \emph{Journal of Machine learning research}, vol.~5, no.~9, 2004.

\bibitem{DBLP:journals/tkde/Omidvar-Tehrani20}
\BIBentryALTinterwordspacing
B.~Omidvar{-}Tehrani and S.~Amer{-}Yahia, ``User group analytics survey and research opportunities,'' \emph{{IEEE} Trans. Knowl. Data Eng.}, vol.~32, no.~10, pp. 2040--2059, 2020. [Online]. Available: \url{https://doi.org/10.1109/TKDE.2019.2913651}
\BIBentrySTDinterwordspacing

\bibitem{MUNSHI2017187}
\BIBentryALTinterwordspacing
A.~Munshi, V.~Sharma, and S.~Sharma, ``Chapter 10 - lessons learned from cohort studies, and hospital-based studies and their implications in precision medicine,'' in \emph{Progress and Challenges in Precision Medicine}, M.~Verma and D.~Barh, Eds.\hskip 1em plus 0.5em minus 0.4em\relax Academic Press, 2017, pp. 187--207. [Online]. Available: \url{https://www.sciencedirect.com/science/article/pii/B9780128094112000106}
\BIBentrySTDinterwordspacing

\bibitem{sun2014use}
X.~Sun, J.~P. Ioannidis, T.~Agoritsas, A.~C. Alba, and G.~Guyatt, ``How to use a subgroup analysis: users’ guide to the medical literature,'' \emph{Jama}, vol. 311, no.~4, pp. 405--411, 2014.

\bibitem{DBLP:journals/vldb/Omidvar-Tehrani20}
\BIBentryALTinterwordspacing
B.~Omidvar{-}Tehrani, S.~Amer{-}Yahia, and L.~V.~S. Lakshmanan, ``Cohort analytics: efficiency and applicability,'' \emph{{VLDB} J.}, vol.~29, no.~6, pp. 1527--1550, 2020. [Online]. Available: \url{https://doi.org/10.1007/s00778-020-00625-6}
\BIBentrySTDinterwordspacing

\bibitem{baraldi2010introduction}
A.~N. Baraldi and C.~K. Enders, ``An introduction to modern missing data analyses,'' \emph{Journal of school psychology}, vol.~48, no.~1, pp. 5--37, 2010.

\bibitem{zhu2021efficient}
X.~Zhu, J.~Yang, C.~Zhang, and S.~Zhang, ``Efficient utilization of missing data in cost-sensitive learning,'' \emph{IEEE Transactions on Knowledge and Data Engineering (TKDE)}, vol.~33, no.~6, pp. 2425--2436, 2021.

\bibitem{little2019statistical}
R.~J. Little and D.~B. Rubin, \emph{Statistical analysis with missing data}.\hskip 1em plus 0.5em minus 0.4em\relax John Wiley \& Sons, 2019, vol. 793.

\bibitem{newman2014missing}
D.~A. Newman, ``Missing data: Five practical guidelines,'' \emph{Organizational Research Methods}, vol.~17, no.~4, pp. 372--411, 2014.

\bibitem{salam2019human}
M.~A. Salam, M.~E. Koone, S.~Thirumuruganathan, G.~Das, and S.~Basu~Roy, ``A human-in-the-loop attribute design framework for classification,'' in \emph{The World Wide Web Conference}, 2019, pp. 1612--1622.

\bibitem{mi1}
G.~Doquire and M.~Verleysen, ``Feature selection with missing data using mutual information estimators,'' \emph{Neurocomputing}, vol.~90, pp. 3--11, 2012.

\bibitem{mi2}
M.~Vollmer, I.~Rutter, and K.~B{\"o}hm, ``On complexity and efficiency of mutual information estimation on static and dynamic data.'' in \emph{EDBT}, 2018, pp. 49--60.

\bibitem{mi3}
E.~Schubert, A.~Koos, T.~Emrich, A.~Z{\"u}fle, K.~A. Schmid, and A.~Zimek, ``A framework for clustering uncertain data,'' \emph{Proceedings of the VLDB Endowment}, vol.~8, no.~12, pp. 1976--1979, 2015.

\bibitem{impute1}
U.~Pujianto, A.~P. Wibawa, M.~I. Akbar \emph{et~al.}, ``K-nearest neighbor (k-nn) based missing data imputation,'' in \emph{2019 5th International Conference on Science in Information Technology (ICSITech)}.\hskip 1em plus 0.5em minus 0.4em\relax IEEE, 2019, pp. 83--88.

\bibitem{impute2}
K.~Yu, Y.~Yang, and W.~Ding, ``Causal feature selection with missing data,'' \emph{ACM Transactions on Knowledge Discovery from Data (TKDD)}, vol.~16, no.~4, pp. 1--24, 2022.

\bibitem{qian2015mutual}
W.~Qian and W.~Shu, ``Mutual information criterion for feature selection from incomplete data,'' \emph{Neurocomputing}, vol. 168, pp. 210--220, 2015.

\bibitem{hamilton2020graph}
W.~L. Hamilton, \emph{Graph representation learning}.\hskip 1em plus 0.5em minus 0.4em\relax Morgan \& Claypool Publishers, 2020.

\bibitem{genossar2023flexer}
B.~Genossar, R.~Shraga, and A.~Gal, ``Flexer: Flexible entity resolution for multiple intents,'' \emph{Proceedings of the ACM on Management of Data}, vol.~1, no.~1, pp. 1--27, 2023.

\bibitem{yu2022multiplex}
P.~Yu, C.~Fu, Y.~Yu, C.~Huang, Z.~Zhao, and J.~Dong, ``Multiplex heterogeneous graph convolutional network,'' in \emph{Proceedings of the 28th ACM SIGKDD Conference on Knowledge Discovery and Data Mining}, 2022, pp. 2377--2387.

\bibitem{kipf2016semi}
T.~N. Kipf and M.~Welling, ``Semi-supervised classification with graph convolutional networks,'' \emph{arXiv preprint arXiv:1609.02907}, 2016.

\bibitem{zhou2020graph}
J.~Zhou, G.~Cui, S.~Hu, Z.~Zhang, C.~Yang, Z.~Liu, L.~Wang, C.~Li, and M.~Sun, ``Graph neural networks: A review of methods and applications,'' \emph{AI open}, vol.~1, pp. 57--81, 2020.

\bibitem{schlichtkrull2018modeling}
M.~Schlichtkrull, T.~N. Kipf, P.~Bloem, R.~Van Den~Berg, I.~Titov, and M.~Welling, ``Modeling relational data with graph convolutional networks,'' in \emph{The Semantic Web: 15th International Conference, ESWC 2018, Heraklion, Crete, Greece, June 3--7, 2018, Proceedings 15}.\hskip 1em plus 0.5em minus 0.4em\relax Springer, 2018, pp. 593--607.

\bibitem{he2022webmile}
Y.~He, Y.~Zhang, S.~Gurukar, and S.~Parthasarathy, ``Webmile: democratizing network representation learning at scale,'' \emph{Proceedings of the VLDB Endowment}, vol.~15, no.~12, 2022.

\bibitem{cohen2009pearson}
I.~Cohen, Y.~Huang, J.~Chen, J.~Benesty, J.~Benesty, J.~Chen, Y.~Huang, and I.~Cohen, ``Pearson correlation coefficient,'' \emph{Noise reduction in speech processing}, pp. 1--4, 2009.

\bibitem{cover1991entropy}
T.~M. Cover, J.~A. Thomas \emph{et~al.}, ``Entropy, relative entropy and mutual information,'' \emph{Elements of information theory}, vol.~2, no.~1, pp. 12--13, 1991.

\bibitem{Vergara2014}
J.~Vergara and P.~Estevez, ``A review of feature selection methods based on mutual information,'' \emph{Neural Computing and Applications}, vol.~24, 01 2014.

\bibitem{brin1997beyond}
S.~Brin, R.~Motwani, and C.~Silverstein, ``Beyond market baskets: Generalizing association rules to correlations,'' in \emph{Proceedings of the 1997 ACM SIGMOD international conference on Management of data}, 1997, pp. 265--276.

\bibitem{scarselli2008graph}
F.~Scarselli, M.~Gori, A.~C. Tsoi, M.~Hagenbuchner, and G.~Monfardini, ``The graph neural network model,'' \emph{IEEE transactions on neural networks}, vol.~20, no.~1, pp. 61--80, 2008.

\bibitem{levin2017markov}
D.~A. Levin and Y.~Peres, \emph{Markov chains and mixing times}.\hskip 1em plus 0.5em minus 0.4em\relax American Mathematical Soc., 2017, vol. 107.

\bibitem{mc1}
P.~Br{\'e}maud, \emph{Markov chains: Gibbs fields, Monte Carlo simulation, and queues}.\hskip 1em plus 0.5em minus 0.4em\relax Springer Science \& Business Media, 2013, vol.~31.

\bibitem{mc2}
J.~R. Norris, \emph{Markov chains}.\hskip 1em plus 0.5em minus 0.4em\relax Cambridge university press, 1998, no.~2.

\bibitem{hamilton2017inductive}
W.~Hamilton, Z.~Ying, and J.~Leskovec, ``Inductive representation learning on large graphs,'' \emph{Advances in neural information processing systems}, vol.~30, 2017.

\bibitem{attrition}
\BIBentryALTinterwordspacing
U.~Zia, ``Employee attrition classification dataset.'' [Online]. Available: \url{https://www.kaggle.com/datasets/stealthtechnologies/employee-attrition-dataset/data}
\BIBentrySTDinterwordspacing

\bibitem{mobile}
\BIBentryALTinterwordspacing
K.~Contributors, ``Kaggle mobile phone classification dataset.'' [Online]. Available: \url{https://www.kaggle.com/datasets/iabhishekofficial/mobile-price-classification/data}
\BIBentrySTDinterwordspacing

\bibitem{loan}
\BIBentryALTinterwordspacing
H.~Sai, ``Kaggle loan default prediction dataset.'' [Online]. Available: \url{https://www.kaggle.com/datasets/hemanthsai7/loandefault}
\BIBentrySTDinterwordspacing

\bibitem{Battiti1994}
R.~Battiti, ``Using mutual information for selecting features in supervised neural net learning,'' \emph{Neural Networks, IEEE Transactions on}, vol.~5, pp. 537 -- 550, 08 1994.

\bibitem{peng2005feature}
H.~Peng, F.~Long, and C.~Ding, ``Feature selection based on mutual information criteria of max-dependency, max-relevance, and min-redundancy,'' \emph{IEEE Transactions on pattern analysis and machine intelligence}, vol.~27, no.~8, pp. 1226--1238, 2005.

\bibitem{bolon2013review}
V.~Bol{\'o}n-Canedo, N.~S{\'a}nchez-Maro{\~n}o, and A.~Alonso-Betanzos, ``A review of feature selection methods on synthetic data,'' \emph{Knowledge and information systems}, vol.~34, pp. 483--519, 2013.

\bibitem{kamalov2023synthetic}
F.~Kamalov, H.~Sulieman, and A.~K. Cherukuri, ``Synthetic data for feature selection,'' in \emph{International Conference on Artificial Intelligence and Soft Computing}.\hskip 1em plus 0.5em minus 0.4em\relax Springer, 2023, pp. 353--365.

\bibitem{fey2019fast}
M.~Fey and J.~E. Lenssen, ``Fast graph representation learning with pytorch geometric,'' \emph{arXiv preprint arXiv:1903.02428}, 2019.

\bibitem{kingma2014adam}
D.~P. Kingma and J.~Ba, ``Adam: A method for stochastic optimization,'' \emph{arXiv preprint arXiv:1412.6980}, 2014.

\bibitem{Meesad2008CombinationOK}
\BIBentryALTinterwordspacing
P.~Meesad and K.~Hengpraprohm, ``Combination of knn-based feature selection and knnbased missing-value imputation of microarray data,'' \emph{2008 3rd International Conference on Innovative Computing Information and Control}, pp. 341--341, 2008. [Online]. Available: \url{https://api.semanticscholar.org/CorpusID:18993863}
\BIBentrySTDinterwordspacing

\bibitem{han2012data}
J.~Han, M.~Kamber, and J.~Pei, ``Data mining concepts and techniques third edition,'' \emph{University of Illinois at Urbana-Champaign Micheline Kamber Jian Pei Simon Fraser University}, 2012.

\bibitem{chung1989measures}
J.~Chung, P.~Kannappan, C.~T. Ng, and P.~Sahoo, ``Measures of distance between probability distributions,'' \emph{Journal of mathematical analysis and applications}, vol. 138, no.~1, pp. 280--292, 1989.

\bibitem{Koller1996TowardOF}
\BIBentryALTinterwordspacing
D.~Koller and M.~Sahami, ``Toward optimal feature selection,'' in \emph{International Conference on Machine Learning}, 1996. [Online]. Available: \url{https://api.semanticscholar.org/CorpusID:1455429}
\BIBentrySTDinterwordspacing

\bibitem{Hancer2018}
\BIBentryALTinterwordspacing
E.~Hancer, B.~Xue, and M.~Zhang, ``Differential evolution for filter feature selection based on information theory and feature ranking,'' \emph{Know.-Based Syst.}, vol. 140, no.~C, p. 103–119, jan 2018. [Online]. Available: \url{https://doi.org/10.1016/j.knosys.2017.10.028}
\BIBentrySTDinterwordspacing

\bibitem{Mafarja2018WhaleOA}
\BIBentryALTinterwordspacing
M.~M. Mafarja and S.~M. Mirjalili, ``Whale optimization approaches for wrapper feature selection,'' \emph{Appl. Soft Comput.}, vol.~62, pp. 441--453, 2018. [Online]. Available: \url{https://api.semanticscholar.org/CorpusID:44611337}
\BIBentrySTDinterwordspacing

\bibitem{JOHN1994121}
\BIBentryALTinterwordspacing
G.~H. John, R.~Kohavi, and K.~Pfleger, ``Irrelevant features and the subset selection problem,'' in \emph{Machine Learning Proceedings 1994}, W.~W. Cohen and H.~Hirsh, Eds.\hskip 1em plus 0.5em minus 0.4em\relax San Francisco (CA): Morgan Kaufmann, 1994, pp. 121--129. [Online]. Available: \url{https://www.sciencedirect.com/science/article/pii/B9781558603356500234}
\BIBentrySTDinterwordspacing

\bibitem{Robert1996}
\BIBentryALTinterwordspacing
R.~Tibshirani, ``Regression shrinkage and selection via the lasso,'' \emph{Journal of the Royal Statistical Society. Series B (Methodological)}, vol.~58, no.~1, pp. 267--288, 1996. [Online]. Available: \url{http://www.jstor.org/stable/2346178}
\BIBentrySTDinterwordspacing

\bibitem{Hui2005}
\BIBentryALTinterwordspacing
H.~Zou and T.~Hastie, ``Regularization and variable selection via the elastic net,'' \emph{Journal of the Royal Statistical Society. Series B (Statistical Methodology)}, vol.~67, no.~2, pp. 301--320, 2005. [Online]. Available: \url{http://www.jstor.org/stable/3647580}
\BIBentrySTDinterwordspacing

\bibitem{Yuan2006}
\BIBentryALTinterwordspacing
M.~Yuan and Y.~Lin, ``Model selection and estimation in regression with grouped variables,'' \emph{Journal of the Royal Statistical Society: Series B (Statistical Methodology)}, vol.~68, no.~1, pp. 49--67, 2006. [Online]. Available: \url{https://rss.onlinelibrary.wiley.com/doi/abs/10.1111/j.1467-9868.2005.00532.x}
\BIBentrySTDinterwordspacing

\bibitem{brown2012}
\BIBentryALTinterwordspacing
G.~Brown, A.~Pocock, M.-J. Zhao, and M.~Luj{{\'a}}n, ``Conditional likelihood maximisation: A unifying framework for information theoretic feature selection,'' \emph{Journal of Machine Learning Research}, vol.~13, no.~2, pp. 27--66, 2012. [Online]. Available: \url{http://jmlr.org/papers/v13/brown12a.html}
\BIBentrySTDinterwordspacing

\bibitem{Hanchuan2005}
H.~Peng, F.~Long, and C.~Ding, ``Feature selection based on mutual information criteria of max-dependency, max-relevance, and min-redundancy,'' \emph{IEEE Transactions on Pattern Analysis and Machine Intelligence}, vol.~27, no.~8, pp. 1226--1238, 2005.

\bibitem{Xue2022}
Y.~Xue, Y.~Tang, X.~Xu, J.~Liang, and F.~Neri, ``Multi-objective feature selection with missing data in classification,'' \emph{IEEE Transactions on Emerging Topics in Computational Intelligence}, vol.~6, no.~2, pp. 355--364, 2022.

\bibitem{ZHU2018488}
\BIBentryALTinterwordspacing
P.~Zhu, Q.~Xu, Q.~Hu, C.~Zhang, and H.~Zhao, ``Multi-label feature selection with missing labels,'' \emph{Pattern Recognition}, vol.~74, pp. 488--502, 2018. [Online]. Available: \url{https://www.sciencedirect.com/science/article/pii/S0031320317303886}
\BIBentrySTDinterwordspacing

\bibitem{Shrikumar2017}
A.~Shrikumar, P.~Greenside, and A.~Kundaje, ``Learning important features through propagating activation differences,'' 04 2017.

\bibitem{Lu2018}
Y.~Y. Lu, Y.~Fan, J.~Lv, and W.~S. Noble, ``Deeppink: reproducible feature selection in deep neural networks,'' in \emph{Proceedings of the 32nd International Conference on Neural Information Processing Systems}, ser. NIPS'18.\hskip 1em plus 0.5em minus 0.4em\relax Red Hook, NY, USA: Curran Associates Inc., 2018, p. 8690–8700.

\bibitem{Sundararajan2017}
M.~Sundararajan, A.~Taly, and Q.~Yan, ``Axiomatic attribution for deep networks,'' in \emph{Proceedings of the 34th International Conference on Machine Learning - Volume 70}, ser. ICML'17.\hskip 1em plus 0.5em minus 0.4em\relax JMLR.org, 2017, p. 3319–3328.

\bibitem{belghazi2018mutual}
M.~I. Belghazi, A.~Baratin, S.~Rajeshwar, S.~Ozair, Y.~Bengio, A.~Courville, and D.~Hjelm, ``Mutual information neural estimation,'' in \emph{International conference on machine learning}.\hskip 1em plus 0.5em minus 0.4em\relax PMLR, 2018, pp. 531--540.

\end{thebibliography}

\end{document}
\endinput